\newtheorem{theorem}{Theorem}
\newcommand*\Let[2]{\State #1 $\gets$ #2}
\newcommand{\printfnsymbol}[1]{%
  \textsuperscript{\@fnsymbol{#1}}%
}
\algrenewcommand\algorithmicrequire{\textbf{Precondition:}}
\algrenewcommand\algorithmicensure{\textbf{Postcondition:}}
\let\oldFootnote\footnote
\newcommand\nextToken\relax
\renewcommand\footnote[1]{%
    \oldFootnote{#1}\futurelet\nextToken\isFootnote}
\newcommand\isFootnote{%
    \ifx\footnote\nextToken\textsuperscript{,}\fi}
\title{C-MI-GAN : Estimation of Conditional Mutual Information \\ Using MinMax Formulation}
\thanks{Affiliated with University of Washington, USA.}\\
\thanks{Affiliated with TIFR Mumbai, India.}\\
\begin{document}

\maketitle

\begin{abstract}
Estimation of information theoretic quantities such as mutual information and its conditional variant has drawn interest in recent times owing to their multifaceted applications. Newly proposed neural estimators for these quantities have overcome severe drawbacks of classical $k$NN-based estimators in high dimensions. In this work, we focus on conditional mutual information (CMI) estimation by utilizing its formulation as a \textit{minmax} optimization problem. Such a formulation leads to a joint training procedure similar to that of generative adversarial networks. We find that our proposed estimator provides better estimates than the existing approaches on a variety of simulated datasets comprising linear and non-linear relations between variables. As an application of CMI estimation, we deploy our estimator for conditional independence (CI) testing on real data and obtain better results than state-of-the-art CI testers.
\end{abstract}

\section{INTRODUCTION}

%\subsection{Motivation}
Quantifying the dependence between random variables is a quintessential problem in data science \citep{renyi1959measures, joe1989relative, fukumizu2008kernel}. A widely used measure across statistics is the Pearson correlation and partial correlation. Unfortunately, these measures can capture and quantify only linear relation between variables and do not extend to non-linear cases. The field of information theory \citep{cover2012elements} gave rise to multiple functionals of data density to capture the dependence between variables even in non-linear cases. Two noteworthy quantities of widespread interest are the mutual information (MI) and conditional mutual information (CMI).  %Assuming that all the distributions admit densities, the CMI can be defined as :
%\[
%I(X;Y|Z)=\iiint p(x,y,z)\log\frac{p(x, y, z)}{p(x, z) p(y|z)} %dxdydz
%\]

In this work, we focus on estimating CMI, a quantity which provides the degree of dependence between two random variables $X$ and $Y$ given a third variable $Z$. CMI provides a strong theoretical guarantee that $I(X;Y|Z) = 0 \iff X \perp Y |Z$. So, one motivation for estimating CMI is its use in conditional independence (CI) testing and detecting causal associations. CI tester built using $k$NN based CMI estimator coupled with a local permutation scheme (\cite{rungeCMIkNN}) was found to be better calibrated than the kernel tests. CMI was used for detecting and quantifying causal associations in spike trains data from neuron pairs \citep{li2011spike}. \cite{rungeAAAS} demonstrate how CMI estimator can be combined with a causal recovery algorithm to identify causal links in a network of variables. 

Apart from its use in CI testers, CMI has found diverse applications in feature selection, communication, network inference and image processing. Selecting features iteratively so that the information is maximized given already selected features was the basis for Conditional Mutual Information Maximization (CMIM) criterion in \cite{fleuret2004fast}. This principle was applied by \cite{wang2004feature} for text categorization, where the number of features are quite large. Efficient methods for CMI based feature selection involving more than one conditioning variable was developed by \cite{shishkin2016nips}. In the field of communications, \cite{yang2007mimo} maximized CMI between target and reflected waveforms for optimal radar waveform design. For learning gene regulatory network, in \cite{zhang2012grn} CMI was used as a measure of dependence between genes. A similar approach was adapted for protein modulation in \cite{giorgi2014protein}. Finally, \cite{loeckx2009nonrigid} used CMI as a similarity metric for non-rigid image registration. Given the widespread use of CMI as a measure of conditional dependence, there is a pressing need to accurately estimate this quantity, which we seek to achieve in this paper.

\section{RELATED WORK}

%\subsection{Prior Art}
 One of the simplest methods for estimating MI (or CMI) could be based on the binning of the continuous random variables, estimating probability densities from the bin frequencies and plugging it in the expression for MI (or CMI). Kernel methods, on the other hand, estimate the densities using suitable kernels. The most widely used estimator of MI, the KSG estimator, is based on $k$ nearest neighbor statistics \citep{kraskov2004} and has been shown to outperform binning or kernel-based methods. KSG is based on expressing MI in terms of entropy
\begin{equation}
I(X;Y) = H(X) + H(Y) - H(X;Y) 
\end{equation}
The entropy estimation follows from $H(X) = - N^{-1} \sum_i \log \widehat{\mu(x_i)}$ \citep{kozachenko1987}. Distance of the $k$ nearest neighbors of point $x_i$ is used to approximate the density $\mu(x_i)$. The KSG estimator does not estimate each entropy term independently, but accounts for the appropriate bias correction terms in the overall estimation. It ensures that an adaptive $k$ is used for distances in marginal spaces $X$, $Y$ and for the joint space $(X, Y)$. Several later works studied the theoretic properties of the KSG estimator and sought to improve its accuracy \citep{gao2015efficient, gao2016breaking, gao2018demystifying, poczos2012nonparametric}. Since CMI can be expressed as a difference of two MI estimates, $I(X;Y|Z) = I(X;YZ) - I(X;Z)$, KSG estimator could be used for CMI as well. Even though KSG estimator enjoys the favorable property of consistency, its performance in finite sample regimes suffers from the curse of dimensionality. In fact, KSG estimator requires exponentially many samples for accurate estimation of MI \citep{gao2015efficient}. This limits its applicability in high dimensions with few samples. 

Deviating from the $k$NN-based estimation paradigm, \cite{belghazi} proposed a neural estimation of MI (referred to as MINE). This estimator is built on optimizing dual representations of the KL divergence, namely the Donsker-Varadhan \citep{donsker1975asymptotic} and the f-divergence representation \citep{nguyen2010estimating}. MINE is strongly consistent and scales well with dimensionality and sample size. However, recent works found the estimates from MINE to have high variance \citep{poole, oord2018representation} and the optimization to be unstable in high dimensions \citep{mukherjee2019ccmi}. To counter these issues, variance reduction techniques were explored in \cite{song2019iclr}.

While for the estimation of MI we need to perform the trivial task of drawing samples from the marginal distribution, CMI estimation adds another layer of intricacy to the problem. For the above approaches to work for CMI, one needs to obtain samples from the conditional distribution. In \cite{mukherjee2019ccmi}, the authors separate the problem of estimating CMI into two stages by first estimating the conditional distribution and then using a divergence estimator. However, being coupled with an initial conditional distribution sampler, this technique is limited by the goodness of the conditional samplers and thus may be sub-optimal. Even when CMI is obtained as a difference of two separate MI estimates (CCMI estimator in \cite{mukherjee2019ccmi}), there is no guarantee that the bias values would be same from both MI terms, thereby leading to incorrect estimates. Based on these observations, in this paper, we attempt to estimate CMI using a joint training procedure involving a min-max formulation devoid of explicit conditional sampling. 

%\subsection{Our Contributions}
The main contributions of our paper are as follows:
\begin{itemize}
\item We formulate CMI as a \textit{minimax} optimization problem and illustrate how it can be estimated from joint training. The estimation process has similar flavor to adversarial training \citep{goodfellow2014GAN} and so the term C-MI-GAN (read ``See-Me-GAN'') is coined for the estimator.
\item We empirically show that estimates from C-MI-GAN are closer to the ground truth on an average as compared to the estimates of other CMI estimators.
\item We apply our estimator for conditional independence testing on a real flow-cytometry dataset and obtain better results than state-of-the-art CI Testers.  
\end{itemize} 

\section{PROPOSED METHODOLOGY}

\textbf{Information Theoretic Quantities.}  Let $X$, $Y$ and $Z$ be three continuous random variables that admit densities. The mutual information between two random variables $X$ and $Y$ measures the amount of dependence between them and is defined as 
\begin{equation}
I(X;Y)=\iint P_{XY}(x,y)\log\frac{P_{XY}(x, y)}{P_X(x) P_Y(y)} dxdy
\end{equation}
It can also be expressed in terms of the entropies as follows:
\begin{equation}
     I(X;Y) = H(X) - H(X|Y) = H(Y) - H(Y|X)
\end{equation}
Here $H(X)$ is the entropy\footnote{More precisely, differential entropy in case of continuous random variables.} and is given by $H(X) = - \int p(x) \, log \, p(x) \, dx$.
The above expression provides the intuitive explanation of how the information content changes when the random variable is alone versus when another random variable is given. 

The conditional mutual information extends this to the setting where a conditioning variables is present. The analogous expression for CMI is:
\begin{equation}
    I(X;Y|Z)=\iiint P_{XYZ}\log\frac{P_{XYZ}}{P_{XZ} P_{Y|Z}} dxdydz
    \label{def:cmi}
\end{equation}{}

In terms of the entropies, it can be expressed as follows:
\begin{align}
    I(X;Y|Z) &= H(X|Z) - H(X|Y,Z) \\
    &= H(Y|Z) - H(Y|X, Z)
\end{align}

Both MI and CMI are special cases of a statistical quantity called KL-divergence, which measures how different one distribution is from another. The KL-divergence between two distributions $P_X$ and $Q_X$ is as follows:
\begin{equation}
D_{KL}(P_{X} || Q_{X}) = \int P_X(x) \log \frac{P_X(x)}{Q_X(x)} dx
\end{equation}

% In terms of the KL-divergence, we can express MI and CMI as follows:
MI and CMI can be defined using KL-divergence as:
\begin{equation}
    I(X;Y) = D_{KL}(P_{XY} || P_X P _Y)
    \label{eqn:mi_dkl}
\end{equation}
\begin{equation}
    I(X;Y|Z) = D_{KL}(P_{XYZ}||P_{XZ}P_{Y|Z})
\end{equation}
This definition of MI (Equation \ref{eqn:mi_dkl}) tries to capture how much the given joint distribution is different from $X$ and $Y$ being independent (conditionally independent in case of CMI). Various estimators in the literature aim to utilize a particular expression of MI (or CMI), while avoiding computation of density functions explicitly. While KSG (\citep{kraskov2004}) is based on the summation of entropy terms, MINE (\citep{belghazi}) derives its estimates based on lower bounds of KL-divergence.

\textbf{Lower bounds of Mutual Information.} The following lower bounds of KL-divergence (hence also mutual information) were used in \cite{belghazi} for the MINE estimator.

Donsker-Varadhan bound : This bound is tighter and is given by : 
\begin{equation}
  D_{KL}(P||Q) = \sup_{R \in \mathcal{R}} (\mathbb{E}_P[R] - \log(\mathbb{E}_Q[e^R]))  \label{eq:dv_identity} 
\end{equation}

f-divergence bound : A slightly loose bound is given by the following relation :
\begin{equation}
D_{KL}(P||Q) = \sup_{R \in \mathcal{R}} (\mathbb{E}_P[R] - \mathbb{E}_Q[e^{R-1}])
\label{eq:f_div}
\end{equation}
The supremum in both the bounds (equation \ref{eq:dv_identity} and \ref{eq:f_div}) is over all functions $R \in \mathcal{R}$ such that the expectations are finite. MINE uses a neural network as a parameterized function $R_\beta$, which is optimized with these bounds. 

\subsection{MIN-MAX FORMULATION FOR CMI}\label{sec:cmi_min_max}

Building on top of these lower bounds, we further take resort to a variational form of conditional mutual information. Observing closely the expression for $I(X;Y|Z)$ in equation \ref{def:cmi}, we find that samples need to be drawn from $p(y|z)$ which is not available from given data $(X, Y, Z)$ directly. One approach used in \cite{mukherjee2019ccmi} is to learn $p(y|z)$ using a conditional GAN, $k$NN or conditional VAE. \textit{Can we combine this step directly with the lower bound maximization ?} 

We first note that the CMI expression can be upper bounded as follows :
\begin{equation}
    \begin{split}
        I(X;Y|Z) &= D_{KL}(P_{XYZ}||P_{XZ}P_{Y|Z}) \\
        &= D_{KL}(P_{XYZ}||P_{XZ}Q_{Y|Z}) \\
        &\qquad - D_{KL}(P_{Y|Z} || Q_{Y|Z}) \\
        &\le D_{KL}(P_{XYZ}||P_{XZ}Q_{Y|Z})
    \end{split}
    \label{cmi_upper_bound}
\end{equation}
since $D_{KL}(P||Q) \geq 0$.
In equation \ref{cmi_upper_bound} the equality is achieved when $Q_{Y|Z} = P_{Y|Z}$ and we can express $I(X;Y|Z)$ as
\begin{equation}
    \begin{split}
        I(X;Y|Z) = \mathop{\text{inf}}_{Q_{Y|Z}} D_{KL}(P_{XYZ}||P_{XZ}Q_{Y|Z})
    \end{split}
    \label{mi_minimization_formulation}
\end{equation}
Equation \ref{mi_minimization_formulation} coupled with the Donsker-Varadhan bound (equation \ref{eq:dv_identity}) leads to a min-max optimization for MI estimation as follows:
\begin{equation}
    \begin{split}
        I(X;Y|Z) &= \mathop{\text{inf}}_{Q_{Y|Z}}~\mathop{\text{sup}}_{R \in \mathcal{R}}\bigg(\mathop{\mathbb{E}}_{s\sim P_{XYZ}}[R(s)] \\
        &\qquad - \log\big(\mathop{\mathbb{E}}_{s\sim P_{XZ}Q_{Y|Z}}[e^{R(s)}]\big)\bigg)
    \end{split}
    \label{cmi_min_max_formulation}
\end{equation}
Equation \ref{cmi_min_max_formulation} offers a pragmatic approach for estimating CMI. Since neural nets are universal function approximators, it is a possibility to deploy one such network to approximate the variational distribution $Q_{Y|Z}$ and another for learning the regression function given by $R$. The following section provides a detailed narration of how to achieve this objective.

\subsection{C-MI-GAN}
\begin{figure}[!ht]
    \centering
    \includegraphics[keepaspectratio, width=\linewidth]{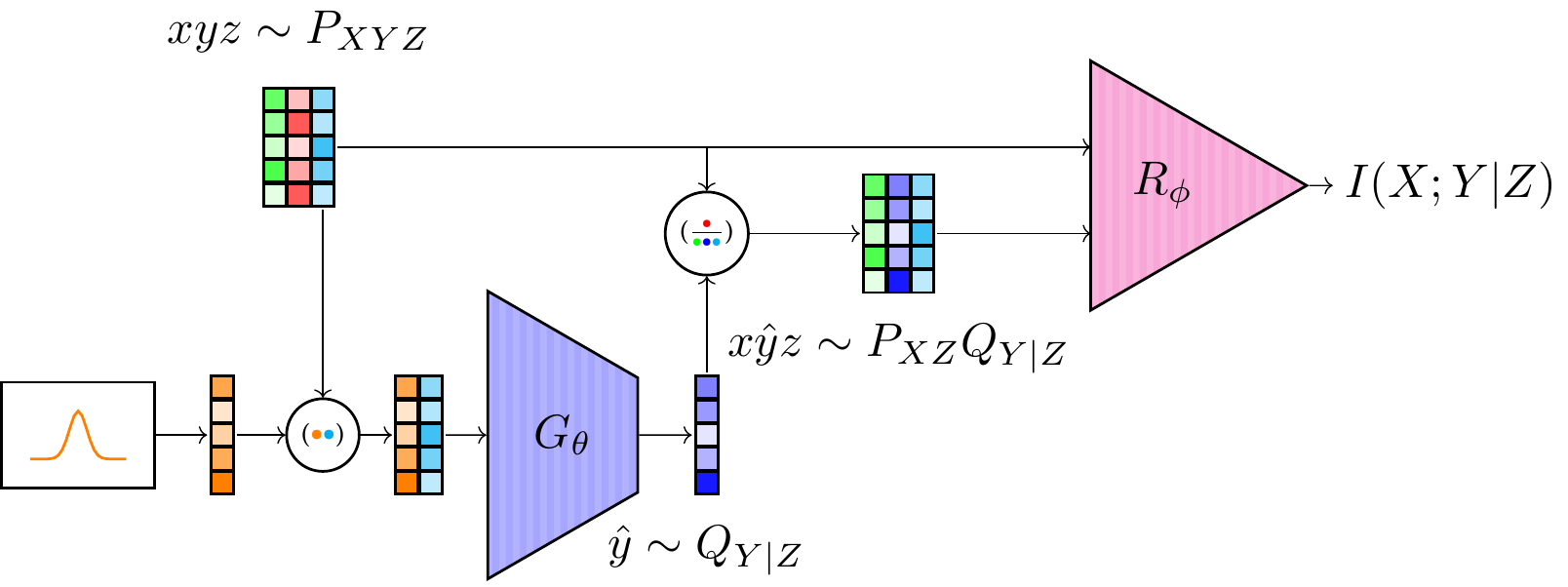}
    \caption{Block Diagram for C-MI-GAN (Best viewed in colour). Samples drawn from any simplistic noise distribution are concatenated with the samples from the marginal $P_{Z}$ and fed to the generator as input. The generated samples from the variational distribution $Q_{Y|Z}$ are then concatenated with samples from $P_{XZ}$ and given as input to the regression network along with samples from $P_{XYZ}$. $I(X;Y|Z)$ is obtained by negating the loss of the trained regression network.}
    \label{fig:cmigan_bd}
    \vspace{-5mm}
\end{figure}{}
To begin with, we elaborate different components of the proposed estimator - C-MI-GAN. As depicted in Figure \ref{fig:cmigan_bd}, the variational distribution, $Q_{Y|Z}$ is parameterized using a network denoted as $G_\theta$. In other words, $G_\theta$ is capable of sampling from the distribution $Q_{Y|Z}$, hence it is called the generator network. The regression network, $R_\phi$ parameterizes the function class on the R.H.S. of the Donsker-Varadhan identity (refer to equation \ref{eq:dv_identity}). Gaussian noise concatenated with conditioning variable $Z$ is fed as input to $G_\theta$. $R_\phi$ is trained with samples from $P_{XYZ}$ and $P_{XZ}Q_{Y|Z}$. During training, we optimize the regression network and the generator jointly using the objective function $h(Q_{Y|Z}, R)$ as defined below.
\begin{equation}
    \begin{split}
        h(Q_{Y|Z}, R) &= \mathop{\text{inf}}_{Q_{Y|Z}}~\mathop{\text{sup}}_{R \in \mathcal{R}}\Bigg(\int\limits_{s\sim P_{XYZ}}P_{XYZ}R(s)ds - \\
        &\qquad\log\bigg(\int\limits_{s\sim P_{XZ}Q_{Y|Z}}P_{XZ}Q_{Y|Z}e^{R(s)}ds\bigg)\Bigg)
    \end{split}{}
    \label{cmigan_loss_function}
\end{equation}{}
In each training loop, we optimize the parameters of $R_\phi$ and $G_\theta$, using a learning schedule and RMSProp optimizer. The detailed procedure is described in Algorithm \ref{alg:cmigan-train-loop}.
Upon successful completion of training of the joint network, $G_\theta$ starts generating samples from the distribution $P_{Y|Z}$ and the output of the regression network converges to $I(X;Y|Z)$.\par
Next, we formally show that the alternate optimization of $R_\phi$ and $G_\theta$ optimizes the objective function defined in equation \ref{cmigan_loss_function} and when the global optima is reached, the optimal value of the objective function coincides with CMI.
To start with, we derive the expression for the optimal regression network and subsequently show that the optimal value of the objective function coincides with CMI under the optimal regression network.

\begin{theorem}
    For a given generator, $G$ , the optimal regression network, $R^{*}$, is 
    \begin{equation}
        \begin{split}
            R^{*} = \log{\frac{P_{XYZ}}{P_{XZ}Q_{Y|Z}}} + c
        \end{split}{}
    \end{equation}{}
    Where $c$ is any constant.
    $P_{XYZ}$, $P_{XZ}$, and $Q_{Y|Z}$ denote the data distribution, marginal distribution and generator distribution respectively.
\end{theorem}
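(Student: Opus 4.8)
The plan is to fix the generator, so that $Q_{Y|Z}$ (and hence the product measure $P_{XZ}Q_{Y|Z}$) is held constant, and then maximize the inner objective over the regression function $R$ alone. Writing $P := P_{XYZ}$ and $Q := P_{XZ}Q_{Y|Z}$ for brevity, the quantity to be maximized is the Donsker-Varadhan functional
\begin{equation}
L(R) = \int P(s)\,R(s)\,ds - \log\Big(\int Q(s)\,e^{R(s)}\,ds\Big).
\end{equation}
This is exactly the inner supremum appearing in equations \ref{eq:dv_identity} and \ref{cmigan_loss_function}, so the content of the theorem is to identify its maximizer.

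First I would compute the variational derivative of $L$ by perturbing $R \to R + \epsilon\,\eta$ in an arbitrary admissible direction $\eta$ and differentiating at $\epsilon = 0$, which gives
\begin{equation}
\frac{d}{d\epsilon}\,L(R+\epsilon\eta)\Big|_{\epsilon=0} = \int P(s)\,\eta(s)\,ds - \frac{\int Q(s)\,e^{R(s)}\,\eta(s)\,ds}{\int Q(s)\,e^{R(s)}\,ds}.
\end{equation}
Requiring this to vanish for every $\eta$ yields the pointwise stationarity condition $P(s) = Q(s)\,e^{R(s)} / \int Q(s')e^{R(s')}\,ds'$. Taking logarithms and solving for $R$ then produces $R^{*}(s) = \log\big(P(s)/Q(s)\big) + c$, with $c = \log\big(\int Q(s')e^{R^{*}(s')}\,ds'\big)$ independent of $s$; substituting back $Q = P_{XZ}Q_{Y|Z}$ recovers the claimed form.

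To confirm that this stationary point is the \emph{global} maximizer over the whole class $\mathcal{R}$, I would invoke concavity: the cumulant functional $R \mapsto \log\int Q\,e^{R}$ is convex (by H\"older's inequality applied to $e^{\lambda R_1 + (1-\lambda)R_2}$), while $R \mapsto \int P\,R$ is linear, so $L$ is concave and every stationary point is a global optimum. Finally, I would substitute $R^{*}$ back into $L$ to check consistency: the additive $c$ in the first term is exactly cancelled by the $\log e^{c}$ produced in the second, leaving $L(R^{*}) = D_{KL}(P\|Q)$, in agreement with the Donsker-Varadhan identity. This same cancellation is what explains the phrase ``$c$ is any constant'': the objective is invariant under $R \mapsto R + c$, so the optimum is pinned down only up to an additive constant.

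The step I expect to be the main obstacle is not the formal differentiation but the transition from a stationarity condition to a genuine global maximum over an infinite-dimensional function class; the concavity argument is what does the real work there. One should also be explicit about the standing regularity assumptions, namely $P \ll Q$ and finiteness of the relevant expectations, under which the Donsker-Varadhan representation and the interchange of differentiation and integration are valid.
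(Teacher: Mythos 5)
Your proof is correct and takes essentially the same route as the paper's: fix the generator, differentiate the Donsker--Varadhan functional with respect to $R$, set the variation to zero to obtain $P_{XYZ} = P_{XZ}Q_{Y|Z}e^{R^*}/\int P_{XZ}Q_{Y|Z}e^{R^*}ds$, and solve for $R^*$ up to the additive constant $c$. The one substantive addition on your side is the concavity argument upgrading the stationary point to a global maximizer (plus the explicit regularity caveats); the paper's proof stops at the first-order condition and implicitly assumes this, so your version is a strictly more careful rendering of the same idea.
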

\begin{proof}\footnote{This proof assumes $P_{XYZ}(s), P_{XZ}Q_{Y|Z}(s) > 0 ~\forall s$.}
For a given generator, $G$, the regression network's objective is to maximize the quantity $h(Q_{Y|Z}, R)$.\\
% Let, $P_{XY} = \mathbb{P}, P_{X}Q_{y} = \mathbb{Q}$ and $r_i = R(s_i)$, where, $s_i \sim \mathbb{P}$ with probability $\mathbb{P}_i$ or $s_i \sim \mathbb{Q}$ with probability $\mathbb{Q}_i$
\begin{equation}
    \begin{split}
        % h(Q_Y, R) &= \mathop{\mathbb{E}}_{s\sim P_{XY}}[R(s)] - \log\big(\mathop{\mathbb{E}}_{s\sim P_XQ_Y}[e^{R(s)}]\big) \\
        % &= \sum\limits_{i}\mathbb{P}_{i}r_i - \log\bigg(\sum\limits_{i}\mathbb{Q}_ie^{r_i}\bigg) \\ 
        h(Q_{Y|Z}, R) &= \int\limits_{s\sim P_{XYZ}}P_{XYZ}R(s)ds - \\
        &\hspace{-2mm}\log\bigg(\int\limits_{s\sim P_{XZ}Q_{Y|Z}}P_{XZ}Q_{Y|Z}e^{R(s)}ds\bigg)
    \end{split}{}
\end{equation}{}
For the optimum regression network, $R^*$,
\begin{equation}
    \begin{aligned}
        &\qquad\qquad\qquad\quad \frac{\partial h}{\partial R}\bigg\lvert_{R^*} = 0 \\
        &\implies P_{XYZ} - \frac{P_{XZ}Q_{Y|Z}e^{R^*}}{\int P_{XZ}Q_{Y|Z}e^{R^*}ds} = 0 \\
        &\implies \frac{P_{XZ}Q_{Y|Z}e^{R^*}}{P_{XYZ}} = \int P_{XZ}Q_{Y|Z}e^{R^*}ds = e^c \\
        &\implies R^{*} = \log{\frac{P_{XYZ}}{P_{XZ}Q_{Y|Z}}} + c
    \end{aligned}{}
\end{equation}{}
% Solving, the optimum regression network, $R^*$, for a given generator is, $$R^{*} = \log{\frac{P_{XY}}{P_XQ_Y}} + \log\mathop{\mathbb{E}}_{P_XQ_Y}e^{R}$$ 
\end{proof}{}
Now we show that with the optimal regression network $R^*$, CMI is obtained when the objective function achieves its minima.
\begin{theorem}
    $h(Q_{Y|Z}, R^*)$ achieves its minimum value $I(X;Y|Z)$, iff $Q_{Y|Z} = P_{Y|Z}$.
\end{theorem}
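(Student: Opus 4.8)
The plan is to substitute the optimal regression network $R^*$ from the previous theorem directly into the objective $h(Q_{Y|Z}, R)$, simplify, and then recognize the resulting expression through the decomposition already established in equation \ref{cmi_upper_bound}. First I would plug $R^* = \log\frac{P_{XYZ}}{P_{XZ}Q_{Y|Z}} + c$ into the first term of $h$; since $\int P_{XYZ}\,ds = 1$, the additive constant $c$ contributes exactly $c$, and the logarithmic part integrates to $D_{KL}(P_{XYZ}||P_{XZ}Q_{Y|Z})$. For the second term, substituting $e^{R^*} = e^c \frac{P_{XYZ}}{P_{XZ}Q_{Y|Z}}$ collapses the inner integral to $e^c\int P_{XYZ}\,ds = e^c$, so its logarithm is just $c$. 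The two $c$'s cancel, leaving $h(Q_{Y|Z}, R^*) = D_{KL}(P_{XYZ}||P_{XZ}Q_{Y|Z})$, which also confirms that the free constant in $R^*$ is immaterial to the objective value.

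Next I would invoke the identity underlying equation \ref{cmi_upper_bound}, namely $D_{KL}(P_{XYZ}||P_{XZ}Q_{Y|Z}) = I(X;Y|Z) + D_{KL}(P_{Y|Z}||Q_{Y|Z})$, to rewrite $h(Q_{Y|Z}, R^*) = I(X;Y|Z) + D_{KL}(P_{Y|Z}||Q_{Y|Z})$. Since $I(X;Y|Z)$ does not depend on $Q_{Y|Z}$, minimizing $h(Q_{Y|Z}, R^*)$ over $Q_{Y|Z}$ reduces to minimizing the residual divergence term. By non-negativity of the KL divergence this term is bounded below by zero, so $h(Q_{Y|Z},R^*) \ge I(X;Y|Z)$, and the minimum value $I(X;Y|Z)$ is attained precisely when $D_{KL}(P_{Y|Z}||Q_{Y|Z}) = 0$, i.e. iff $Q_{Y|Z} = P_{Y|Z}$. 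This delivers both directions of the iff in one stroke.

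The only real subtlety, and the step I would be most careful about, is the interpretation of the term $D_{KL}(P_{Y|Z}||Q_{Y|Z})$ carried over from equation \ref{cmi_upper_bound}. Unwinding the difference of the two joint divergences, the $X$-variable integrates out against $P_{XYZ}$ to leave $\mathbb{E}_{Z\sim P_Z}\big[\int P_{Y|Z}\log\frac{P_{Y|Z}}{Q_{Y|Z}}\,dy\big]$, that is, an expectation over $Z$ of pointwise KL divergences rather than a single divergence. I would state explicitly that this conditional divergence is non-negative and vanishes iff $P_{Y|Z}(\cdot\mid z) = Q_{Y|Z}(\cdot\mid z)$ for $P_Z$-almost every $z$, which is exactly the equality condition $Q_{Y|Z} = P_{Y|Z}$. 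With this clarified, the chain of equalities together with the sign argument closes the proof with no further obstacles.
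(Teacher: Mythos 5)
Your proposal is correct and follows essentially the same route as the paper: substitute $R^*$ into $h$, observe that the second (log-partition) term collapses, and decompose the remaining divergence as $I(X;Y|Z) + D_{KL}(P_{Y|Z}\|Q_{Y|Z})$ before applying non-negativity of the KL term. Your explicit handling of the free constant $c$ (showing it cancels) and your remark that $D_{KL}(P_{Y|Z}\|Q_{Y|Z})$ is really an expectation over $P_Z$ of pointwise divergences, vanishing iff $Q_{Y|Z}(\cdot\mid z)=P_{Y|Z}(\cdot\mid z)$ for $P_Z$-almost every $z$, are small but genuine tightenings of the paper's argument rather than a different proof.
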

\begin{proof}
    \begin{equation}
    \begin{split}
        h(Q_{Y|Z}, R^*) &= \int\limits_{s}P_{XYZ}\log{\frac{P_{XYZ}}{P_{XZ}Q_{Y|Z}}}dxdydz - \\
        &\quad \log\bigg(\int\limits_{s}P_{XZ}Q_{Y|Z}e^{\log\frac{P_{XYZ}}{P_{XZ}Q_{Y|Z}}}dxdydz\bigg) \\
        &= \int\limits_{s}P_{XYZ}\log{\frac{P_{XYZ}}{P_{XZ}Q_{Y|Z}}}dxdy - \\
        &\qquad \int\limits_{s}P_{XYZ}dxdydz \\
        &= \int\limits_{s}P_{XYZ}\log{\frac{P_{XYZ}}{P_{XZ}P_{Y|Z}}}dxdydz + \\
        &\qquad \int\limits_{s}P_{Y|Z}\log{\frac{P_{Y|Z}}{Q_{Y|Z}}}dy \\
        &= I(X;Y|Z) + D_{KL}(P_{Y|Z}||Q_{Y|Z})
        % h(Q_Y, R^*) &= \sum\limits_{i}\mathbb{P}_{i}r_i^* - \log\bigg(\sum\limits_{i}\mathbb{Q}_ie^{r_i^*}\bigg) \\
        % &= \sum\limits_{i}\mathbb{P}_{i}\log\frac{\mathbb{P}_i}{\mathbb{Q}_i} + c - \log\bigg(\sum\limits_{i}\mathbb{Q}_ie^{(\log\frac{\mathbb{P}_i}{\mathbb{Q}_i} + c)}\bigg) \\
        % &= \sum\limits_{x}\sum\limits_{y}P_{XY}\log\frac{P_{XY}}{P_XQ_Y} + c - \log e^c \\
        % &= \sum\limits_{x}\sum\limits_{y}P_{XY}\log\frac{P_{XY}}{P_XP_Y} + \sum\limits_{y}P_{Y}\log\frac{P_Y}{Q_Y} \\
        % &= I(X;Y) + D_{KL}(P_Y||Q_Y)
    \end{split}{}
\end{equation}{}
Since $D_{KL}(P_{Y|Z}||Q_{Y|Z}) \ge 0$, when $P_{Y|Z}=Q_{Y|Z}$, $D_{KL}(P_{Y|Z}||Q_{Y|Z}) = 0$ and the minimum value is $h(P_{Y|Z}, R^*) = I(X;Y|Z)$.
\end{proof}

The alternate optimization of $R_\phi$ and $G_\theta$ is similar to the generative adversarial networks (\cite{goodfellow2014GAN}, \cite{fgan}), in that both are trained using similar adversarial training procedure. However, C-MI-GAN significantly differs from traditional GAN in the following sense:
\begin{itemize}
    \item The regression task is completely unsupervised as no target value is used to train the network.
    \item The proposed loss function to estimate CMI is foreign to traditional GAN literature.
    \item The binary discriminator in traditional GAN is replaced by a regression network, $R_\phi$ that estimates the CMI (refer to Figure \ref{fig:cmigan_bd}).
\end{itemize}{}

 \begin{algorithm}
    \caption{Pseudo code for C-MI-GAN \label{alg:cmigan-train-loop}}
    \hspace*{\algorithmicindent} \textbf{Inputs:} $\mathcal{D}=\{x^{(i)}, y^{(i)}, z^{(i)}\}_{i=1}^{m} \sim P_{XYZ}$ \\
    \hspace*{\algorithmicindent} \textbf{Outputs:} $I(X;Y|Z)$ \\
    \begin{algorithmic}[1]
        \Function{CMIGAN}{}
            \For{$r \gets 1 \textrm{ to } N$}
                \State Initialize $R_\phi$ and $G_\theta$
                \For{$i \gets 1 \textrm{ to } training\_steps$}
                    \For{$j \gets 1 \textrm{ to } reg\_training\_ratio$}
                        \State Shuffle $\mathcal{D} \sim P_{XYZ}$             
                        \Let{$\mathcal{D}_b$}{$\{x^{(k)}, y^{(k)}, z^{(k)}\}_{k=1}^{s} \sim P_{XYZ}$}
                        \Let{$noise$}{$\{n^{(k)}\}_{k=1}^{s} \sim \mathcal{N}(0, I_{d_n})$}
                        \Let{$\{y_\theta^{(k)}\}_{k=1}^{s}$}{$G_\theta(noise,Z_b)$}
                        \Let{$\hat{\mathcal{D}}_b$}{$\{x^{(k)}, y_\theta^{(k)}, z^{(k)}\}_{k=1}^{s}$}
                        \Let{$L_{reg}$}{$-\mathop{\mathbb{E}}_{\mathcal{D}_b}[R_\phi]+ \log(\mathop{\mathbb{E}}_{\hat{\mathcal{D}}_b}[e^{R_\phi}])$}
                        % \Let{$t$}{$ij+j$}
                        % \Let{$g_{\phi}(t)$}{$\nabla_\phi L_{reg}$}
                        % \Let{$\phi_{t}$}{$\phi_{t-1}-\frac{\eta_{reg} g_{\phi}(t)}{\sqrt{\rho g_{\phi}(t) + (1-\rho)g_{\phi}^2(t-1)+\epsilon}}$}
                        \State Minimize $L_{reg}$ and Update $\phi$
                    \EndFor
                    \Let{$L_{gen}$}{$-\log(\mathop{\mathbb{E}}_{\hat{\mathcal{D}}_b}[e^{R_\phi}])$}
                    % \Let{$g_{\theta}(i)$}{$\nabla_\theta L_{gen}$}
                    % \Let{$\theta_{i}$}{$\theta_{i-1}-\frac{\eta_{gen} g_{\theta}(i)}{\sqrt{\rho g_{\theta}(i) + (1-\rho)g_{\theta}^2(i-1)+\epsilon}}$}
                    \State Minimize $L_{gen}$ and Update $\theta$
                    % \If {$i \% lr\_schedule\_interval == 0$}
                    %     \Let{$\eta_{gen}$}{$\eta_{gen} \times decay\_rate$}
                    %     \Let{$\eta_{reg}$}{$\eta_{reg} \times decay\_rate$}
                    \State Update learning rate as per scheduler.
                    % \EndIf
                \EndFor
                \Let{$\hat{I}_n(X;Y|Z)$}{$-L_{reg}$}
            \EndFor
            \Let{$\hat{I}(X;Y|Z)$}{$\frac{1}{N}\sum\limits_{n=1}^{N}\hat{I}_n(X;Y|Z)$}
        \EndFunction
    \end{algorithmic}
\end{algorithm}

\begin{figure*}[t]
    \centering
    \includegraphics[keepaspectratio, width=\textwidth]{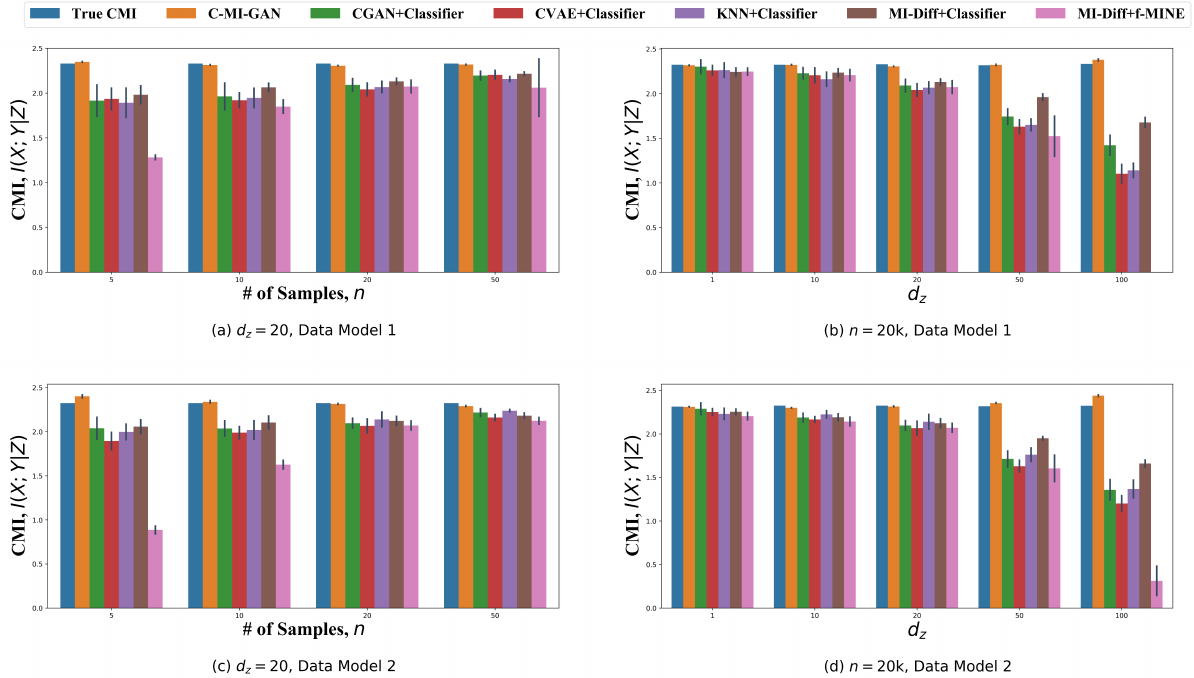}
    \caption{Performance of CMI estimators on the dataset generated using linear models. (a) Model 1: CMI estimates with fixed $d_z=20$ and variable sample size, (b) Model 1: CMI estimates with fixed sample size, $n=20$k and variable $d_z$, (c) Model 2: CMI estimates with fixed $d_z=20$ and variable sample size, (d) Model 2: CMI estimates with fixed number of samples, $n=20k$ and variable $d_z$. Average over 10 runs is plotted. Variation in the estimates are measured using standard deviation and are highlighted in the plots using the thin dark lines on top of the bar plots. The proposed method, C-MI-GAN provides closer estimate of the true CMI, while the state of the art estimators largely underestimate its value. C-MI-GAN outperform the state of the art methods in terms of variation in estimates as well. (Best viewed in color)}
    \label{fig:lin_cmi_comparison}
    \vspace{-5mm}
\end{figure*}{}

\section{EXPERIMENTAL RESULTS}

In this section we compare the CMI estimates, on different datasets, of our proposed method against the estimations of the state of the art CMI estimators such as f-MINE (\cite{belghazi}) and CCMI (\cite{mukherjee2019ccmi}). We design similar experiments on similar datasets as \cite{mukherjee2019ccmi} to demonstrate the efficacy of our proposed estimator: C-MI-GAN. 
Unlike the method proposed in this work, the existing methods rely on a separate generator for generating samples from the conditional distribution. Therefore, ``Generator''+``Divergence  estimator'' notation is used to denote the estimators used as baseline. For example, CVAE+f-MINE implies that Conditional VAE (\cite{NIPS2015_5775}) is used for generating samples from $P_{Y|Z}$ and f-MINE (\cite{belghazi}) is used for divergence estimation. MI difference based estimators are represented as MI-Diff.+``Divergence estimator''. For baseline models we have used the codes available in the repository of \cite{mukherjee2019ccmi}. Architecture for $R_\phi$ and $G_\theta$ and hyper-parameter settings for our proposed method are provided in the supplementary.

To illustrate C-MI-GAN's effectiveness in estimating CMI, we consider two synthetic and one real datasets: 
\begin{itemize}
    \item Synthetically generated datasets having linear dependency.
    \item Synthetically generated datasets having non-linear dependency.
    \item Air quality real dataset\footnote{\url{http://archive.ics.uci.edu/ml/datasets/Air+Quality}}\footnote{For a detailed description of the dataset please refer to the supplementary material.} (\cite{aq_data})
\end{itemize}{}
The most severe problem with the existing CMI estimators is that their performance drop significantly with increase in data dimension. To see how well the proposed estimator fares, compared to the existing estimators for high dimensional data, we vary the dimension, $d_z$ of the conditioning variable, $Z$ over a wide range, in both the datasets. For the non-linear dataset $d_x=d_y=1$, as found commonly in the literature on causal discovery and independence testing (\cite{mukherjee2019ccmi}, \cite{sen2017modelpowered}, \cite{10.5555/3020751.3020766}). To validate, how well the proposed estimator performs on a multi-dimensional $X$ and $Y$, we vary $d_x=d_y \in \{1, 5, 10, 15\}$ for the linear dataset. Besides, we consider datasets having as low as $5$k to as high as $50$k samples to understand the behaviour of the estimators as sample complexity varies.\par
\textbf{Ground Truth CMI:} For the datasets with linear dependency, ground truth CMI can be computed by numerical integration. However, to the best of our knowledge, there is no analytical formulation to compute the ground truth CMI for the synthetic datasets with non-linear dependency. As a workaround to this issue, as proposed by \cite{mukherjee2019ccmi}, we transform $Z$ as $U=A_{zy}Z$, where $A_{zy}$ is a random vector with entries drawn independently from $\mathcal{N}(0, 1)$ and then normalized to have unit norm. Following this transformation, $I(X;Y|Z)=I(X;Y|U)$. Since, $U$ has unity dimension $I(X;Y|U)$ can be estimated accurately using KSG given sufficient samples, as shown by \cite{gao2018demystifying} in the asymptotic analysis of KSG. Hence, a set of $50000$ samples is generated separately for each data-set to estimate $I(X;Y|U)$ and the estimated value is used as the ground truth for that data-set.\par
Next, as a practical application of CMI estimation, we test the null hypothesis of conditional independence on a synthetic dataset, and on real flow-cytometry data.
\begin{figure*}[t]
    \centering
    \includegraphics[keepaspectratio, width=\textwidth]{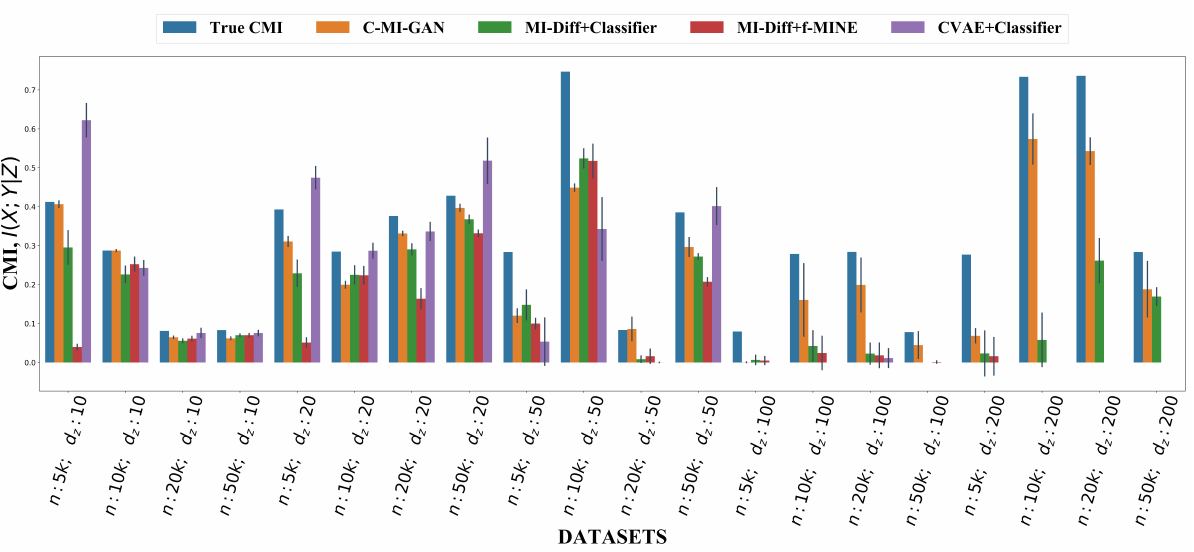}
    \caption{This figure compares the performance of the different CMI estimators on all the 20 non linear datasets. However, due to very poor performance of ``Generator''+``Classifier'' estimators, we plot the estimates of ``CVAE''+``Classifier'' only as a representative of that class of estimators. Estimated CMI, averaged over 10 runs is plotted. Standard deviation is indicated with the thin dark lines on top of the bar plot. Like in the linear case, the proposed C-MI-GAN outperforms the state of the art estimators in terms of both average CMI estimation and variation in estimation (Best viewed in color).}
    \label{fig:non_lin_cmi_comparison}
    \vspace{-5mm}
\end{figure*}{}
\subsection{CMI ESTIMATION}
\subsubsection{Dataset With Linear Dependence}\label{lin_data_models}
We consider the following data generative models.
\begin{itemize}
    \item \textbf{Model 1:} $X \sim \mathcal{N}(0, 1); Z \sim \mathcal{U}(-0.5, 0.5)^{d_z}; \epsilon \sim \mathcal{N}(Z_1, 0.01); Y \sim X+\epsilon$. \\
    In this model $X$ is sampled from standard normal distribution. Value of each dimension of $Z$ is drawn from a uniform distribution with support $[-0.5, 0.5]$. Finally $Y$ is obtained by perturbing $X$ with $\epsilon$, where $\epsilon$ comes from a Gaussian distribution having mean $Z_1$, the first dimension of $Z$ and variance $0.01$. Therefore, the dependence between $X$ and $Y$ is through the first dimension of the conditioning variable $Z$.
    \item \textbf{Model 2:} $X \sim \mathcal{N}(0, 1); Z \sim \mathcal{N}(0, 1)^{d_z}; U=w^TZ; ||w||_1=1;\epsilon \sim \mathcal{N}(U, 0.01); Y \sim X+\epsilon$\\
    Unlike in model 1, here the dimensions of $Z$ comes from standard normal distribution and the mean of $\epsilon$ is weighted average of all the dimensions of $Z$. The weight vector $w$ is constant for a particular dataset and varies across datasets generated by model 2.
    \item \textbf{Model 3:} $X \sim \mathcal{N}(0, 0.25)^{d_x}; Z^{(i)} \sim \mathcal{U}(-0.5, 0.5)^{d_z}; \epsilon \sim \mathcal{N}(Z_1, 0.25)^{d_z}; Y \sim X+\epsilon$. \\
    This model is very similar to Model 1, except for the fact $d_x = d_y = d_z \in \{5, 10, 15, 20\}$. Since, each dimension of the random variables are independent of each other, the ground truth CMI is estimated as sum of CMI of each dimension.
\end{itemize}{}
\vspace{-5mm}
For Models 1 and 2, to study the effect of sample size on the estimate we generate data by fixing $d_z=20$ and $n \in \{5000,10000,20000,50000\}$. Next, we fix $n=20000$ and $d_z \in \{1,10,20,50,100\}$ to observe the effect of dimension on the estimation.\par
Figure \ref{fig:lin_cmi_comparison} compares the average estimated CMI over $10$ runs for linear datasets. C-MI-GAN estimates are usually closer to the ground truth and exhibits less variation as compared to other estimators.\par
Next, we consider a multidimensional dataset generated using Model 3. We tabulate (refer to Table \ref{table:multi_dx_dy_dz_linear_cmi}) the average estimate of C-MI-GAN and the current state-of-the-art CCMI\footnotemark[5] (\cite{mukherjee2019ccmi}) over $10$ executions.
\begin{table}[ht]
    \begin{center}
    \caption{CMI Estimates on Multidimensional Linear Dataset Generated Using Model 3. All datasets have $d_x=d_y=d_z$ (Dim.). C-MI-GAN estimates are closer to ground truth and has less standard deviation.}
    \label{table:multi_dx_dy_dz_linear_cmi}
	\resizebox{\columnwidth}{!}{
        \begin{tabular}{c c c c }
            \hline
            Dim. & True CMI & CCMI & C-MI-GAN \\
            \hline
            % $5$  & $1.75$ & $1.61 \pm 4.45 \times 10^{-2}$ & $\boldsymbol{1.7 \pm 1.6 \times 10^{-5}}$   \\
            % $10$ & $3.48$ & $2.96 \pm 9.87 \times 10^{-2}$ & $\boldsymbol{3.34 \pm 1.49\times 10^{-4}}$  \\
            % $15$ & $5.22$ & $4.2 \pm 2.65 \times 10^{-1}$  & $\boldsymbol{5.03 \pm 6.57 \times 10^{-4}}$ \\
            % $20$ & $6.99$ & $4.8 \pm 4.71 \times 10^{-1}$  & $\boldsymbol{6.91 \pm 4.32 \times 10^{-2}}$ \\
            $5$  & $1.75$ & $1.61 \pm 4.45e-2$ & $\boldsymbol{1.7 \pm 1.6e-5}$   \\
            $10$ & $3.48$ & $2.96 \pm 9.87e-2$ & $\boldsymbol{3.34 \pm 1.49e-4}$  \\
            $15$ & $5.22$ & $4.2 \pm 2.65e-1$  & $\boldsymbol{5.03 \pm 6.57e-4}$ \\
            $20$ & $6.99$ & $4.8 \pm 4.71e-1$  & $\boldsymbol{6.91 \pm 4.32e-2}$ \\
            \hline \hline
        \end{tabular}
        }
        \vspace{-10mm}
    \end{center}
\end{table}
\subsubsection{Dataset With Non-Linear Dependence}\label{non-lin-data-model}
\textbf{Data generating model:}
$$Z \sim \mathcal{N}(\mathbbm{1}, I_{d_z})$$
$$X=f_1(\eta_1)$$
$$Y=f_2(A_{zy}Z+A_{xy}X+\eta_2)$$
Where, $f_1, f_2 \in \{ \cos(\cdot), \tanh(\cdot), \exp(-|\cdot|) \}$ and selected randomly; $\eta_1, \eta_2 \sim \mathcal{N}(0, 0.1)$. The elements of the random vector $A_{zy}$ are drawn independently from $\mathcal{N}(0, 1)$. The vector is then normalized to have unit norm. Since, $d_x=d_y=1$, $A_{xy}=2$ is a scalar.\par
To generate the data we consider all possible combinations of $n \in \{5000,10000,20000,50000\}$ and $d_z \in \{10,20,50,100,200\}$, and obtain a set of $20$ data-sets.\par
Figure \ref{fig:non_lin_cmi_comparison} plots the average estimate of different estimators over $10$ runs for all $20$ data-sets. Error bar (standard deviation) is plotted as well on top of the estimation. Only MI-Diff.+Classifier among the existing estimators provides reasonable estimate when $d_z$ is high, while the proposed method tracks the true CMI more closely.
\subsubsection{Real Data: Air Quality Data Set}
Finally, we estimate CMI on a real dataset of air quality (\cite{aq_data}) using C-MI-GAN. We tabulate the estimates obtained in Table \ref{table:aq_cmi}. We consider the causal graph discovered by \cite{rungeCMIkNN} representing the dependencies between three pollutants and three meteorological factors (refer to supplementary material) as the ground truth. The graph captures the strength of these dependencies. However, the ground truth CMI being unknown, we could validate only the ordering of the estimated CMI. As can be seen from the table, our estimates are in coherence with the findings of \cite{rungeCMIkNN} and the estimates of CCMI\footnote{MI-diff + Classifier} (\cite{mukherjee2019ccmi}) and KSG (\cite{kraskov2004}).For example, $I(\text{CO}; \text{C\textsubscript{6}H\textsubscript{6}}| \text{T})$ is higher as compared to $I(\text{C\textsubscript{6}H\textsubscript{6}}; \text{RH} | \text{T})$ , indicating a relatively stronger conditional dependency between the former pair. This is in compliance to the ground truth causal graph (see supplementary material).
\begin{table}[hb]
    \vspace{-1mm}
    \caption{CMI Estimation: Air Quality Dataset}
    \vspace{-3mm}
    \label{table:aq_cmi}
    \begin{center}
    \resizebox{\columnwidth}{!}{
    \begin{tabular}{c c c c c c}
        \hline
        $X$                                  & $Y$                           & $Z$                                  &   CCMI   &   KSG     &    C-MI-GAN     \\
        \hline
        CO                                     &  C\textsubscript{6}H\textsubscript{6} &  T                                           & $0.61$   &   $0.65$   &    $0.66$      \\
        CO                                     &  C\textsubscript{6}H\textsubscript{6} &  NO\textsubscript{2}                         & $0.33$   &   $0.37$   &    $0.37$      \\
        CO                                     &  C\textsubscript{6}H\textsubscript{6} &  RH                                          & $0.56$   &   $0.60$   &    $0.59$      \\
        CO                                     &  C\textsubscript{6}H\textsubscript{6} &  AH                                          & $0.58$   &   $0.63$   &    $0.62$      \\
        NO\textsubscript{2}                    & C\textsubscript{6}H\textsubscript{6}  &  AH                                          & $0.40$   &   $0.45$   &    $0.45$      \\
        NO\textsubscript{2}                    & C\textsubscript{6}H\textsubscript{6}  &  T                                           & $0.38$   &   $0.43$   &    $0.44$      \\
        \hline
        NO\textsubscript{2}                    & CO                                    &  C\textsubscript{6}H\textsubscript{6}        & $0.06$   &   $0.11$   &    $0.12$      \\
        NO\textsubscript{2}                    & RH                                    &  C\textsubscript{6}H\textsubscript{6}        & $0.01$   &   $0.05$   &    $0.05$      \\
        C\textsubscript{6}H\textsubscript{6}   & AH                                    &  T                                           &  $0.01$  &   $0.07$   &    $0.07$      \\
        C\textsubscript{6}H\textsubscript{6}   & RH                                    &  T                                           &  $0.02$  &   $0.06$   &    $0.06$      \\
        CO                                     & AH                                    & C\textsubscript{6}H\textsubscript{6}         & $0.03$   &   $0.06$   &    $0.07$      \\
        RH                                     & CO                                    & C\textsubscript{6}H\textsubscript{6}         & $0.04$   &   $0.09$   &    $0.09$      \\
        \hline \hline \\
    \end{tabular}
    }
    \end{center}
\end{table}
\vspace{-9mm}
\subsection{APPLICATION: CONDITIONAL INDEPENDENCE TESTING (CIT)}
\subsubsection{Synthetic Dataset}
To evaluate the proposed CMI estimator on an application, we consider testing the null hypothesis of conditional independence, used widely in conventional literature (\cite{sen2017modelpowered, mukherjee2019ccmi}). The objective here is to decide, whether $X$ and $Y$ are independent given $Z$ when we have access to samples from the joint distribution. Formally, given samples from the distributions $P(x,y,z)$ and $Q(x,y,z)$ where $Q(x,y,z)=P(x,z)P(y|z)$ , we have to test our estimators on the hypothesis testing framework given by the null, $H_0: X \perp Y|Z$  and the alternative, $H_1: X \not\perp Y|Z$.\par
The conditional independence test setting will be used to test our estimator based on the fact that $X \perp Y|Z \iff I(X;Y|Z)=0$. A simple rule can thus be established: reject the null hypothesis if $I(X;Y|Z)$ is greater than some threshold (to allow some tolerance in the estimation) and accept it otherwise.\par 
CIT can be cast as binary classification problem where samples belong to either class-CI or class-CD. Therefore, area under ROC curve (AuROC) is a good metric to compare the performance of different algorithms. Therefore, we consider the AuROC scores of different models for performance comparison.\par
\begin{figure}[!ht]
    \centering
    \includegraphics[clip, trim=0.3cm 0.3cm 0.3cm 0.3cm, keepaspectratio, width=\columnwidth]{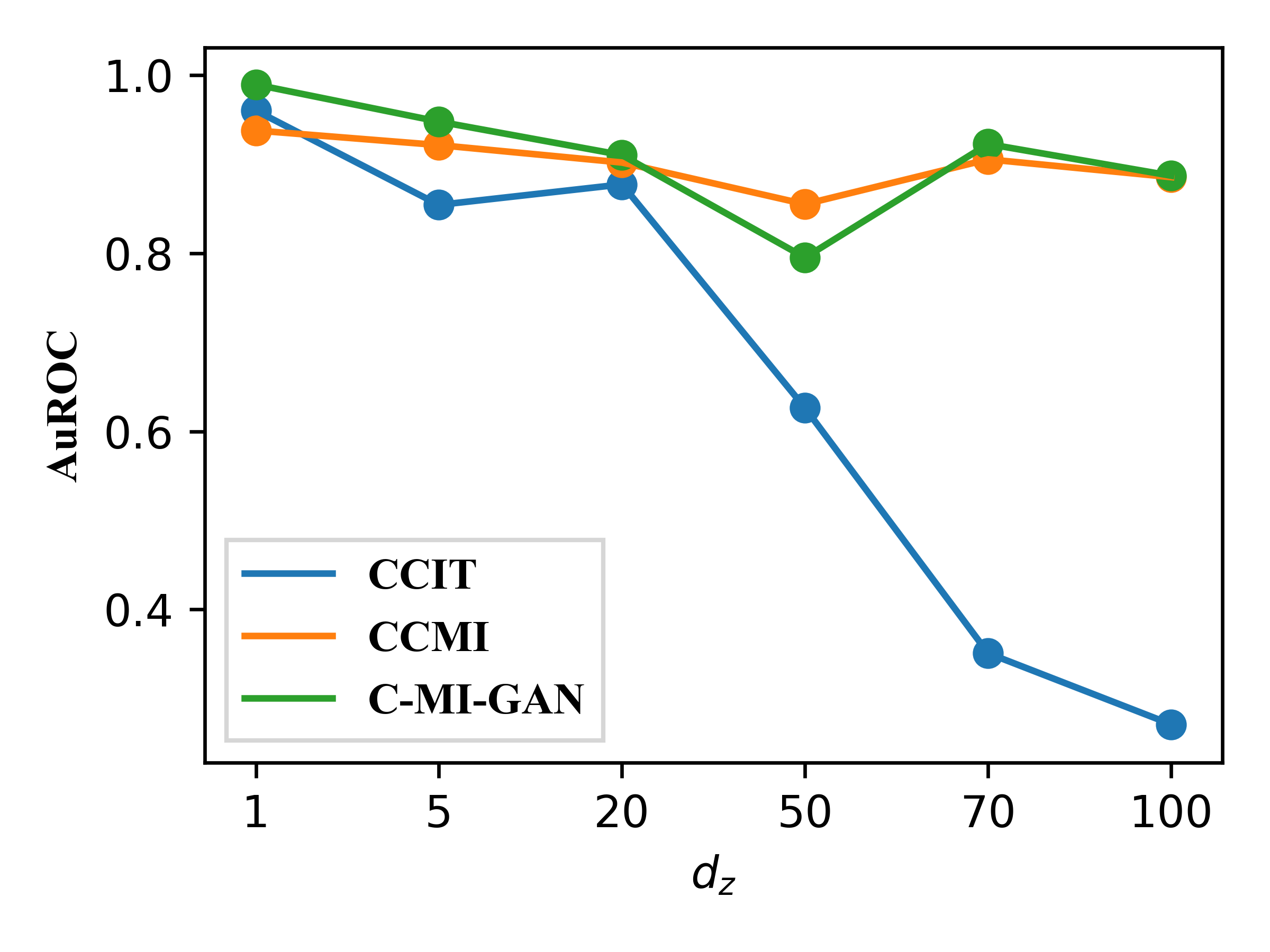}
    \caption{Performace of CCIT degrades with $d_z$. CCMI and C-MI-GAN are comparable across all $d_z$. (Best viewed in color). }
    \label{fig:syn_cit_auroc}
    \vspace{-3mm}
\end{figure}{}

Synthetic data is generated using the post non-linear noise model as used by \cite{sen2017modelpowered} and \cite{mukherjee2019ccmi}. The data generation model is as follows:
$$Z \sim \mathcal{N}(\mathbbm{1}, I_{d_z})$$
$$X=\cos(a_xZ+\eta_1)$$
$$Y=\begin{cases}
      \cos(b_yZ+\eta_2), & \text{if}\ X \perp Y |Z \\
      \cos(cX+b_yZ+\eta_2), & \text{otherwise}
    \end{cases}$$
$\eta_1, \eta_2 \sim \mathcal{N}(0, 0.25)$; $a_x, b_y \sim \mathcal{U}(0, 1)^{d_z}$ and normalized such that $||a_x||_2 = ||b_y||_2 = 1$; $c \sim \mathcal{U}(0, 2)$. As before the model parameters $a_x, b_y,\text{ and } c$ are kept constant for a particular dataset but varies across datasets. $d_x=d_y=1$ and $d_z \in \{1, 5, 20, 50, 70, 100\}$. $100$ datasets consisting of $50$ conditionally independent and $50$ conditionally dependent datasets are generated for each $d_z$. Sample size of each dataset is fixed as $n=5000$.\par
Figure \ref{fig:syn_cit_auroc} compares the performance of C-MI-GAN with CCIT (\cite{sen2017modelpowered}) and CCMI (\cite{mukherjee2019ccmi}). Performance of CCIT degrades rapidly as $d_z$ increases. Performance of C-MI-GAN and CCMI remains comparable for all $d_z$. Performance of C-MI-GAN remains undeterred with increasing dimensions.
\subsubsection{Flow-Cytometry: Real Data}
To test the efficacy of our proposed method in conditional independence testing on real data, we have used Flow cytometry dataset introduced by \cite{Sachs523}. This dataset quantifies the availability of 11 biochemical compounds in single human immune system cells under different test conditions. Please refer to the supplementary material for the consensus network, which serves the purpose of ground truth. It depicts the causal relations between the 11 biochemical compounds.\par 
The underlying concept for generating the CI and CD datasets is similar to that used in \cite{sen2017modelpowered} and \cite{mukherjee2019ccmi}.  A node $X$ is conditionally independent of any other unconnected node $Y$ given its Markov blanket i.e. its parents, children and co-parents of children. So given $Z$ consisting of the parents, children and co-parents of children of $X$, $X$ is conditionally independent of any other node $Y$. Also, if a direct edge exists between $X$ and $Y$, then given any $Z$, $X$ is not conditionally independent of $Y$. We have used this philosophy to create $70$ CI and $54$ CD datasets.\par
\cite{Sachs523} and \cite{10.5555/3023638.3023682} used a subset of $8$ of the available $14$ original flow cytometry datasets in their experiments to come up with Bayesian networks representing the underlying causal structure. We also used those 8 datasets in our experiments which had a combined total of around $7000$ samples. The dimension of $Z$ varies in the range $3$ to $8$.\par
Figure \ref{fig:flow_cyto_roc} compares the AuROC score of C-MI-GAN against the scores of CCMI and CCIT. C-MI-GAN retains its superior performance when compared against CCMI and CCIT. Surprisingly, CCIT outperforms CCMI contradicting the result presented by \cite{mukherjee2019ccmi}. This discrepancy might be due to limited capacity of their model architecture. We have created a larger dataset consisting of around $7000$ samples. Whereas, the numbers reported by \cite{mukherjee2019ccmi} are based on a smaller subset consisting of 853 data points. A larger network might improve the performance of CCMI. \par
\begin{figure}[!ht]
    \centering
    \includegraphics[clip, trim=0.5cm 0.5cm 0.5cm 0.5cm, keepaspectratio, width=\columnwidth]{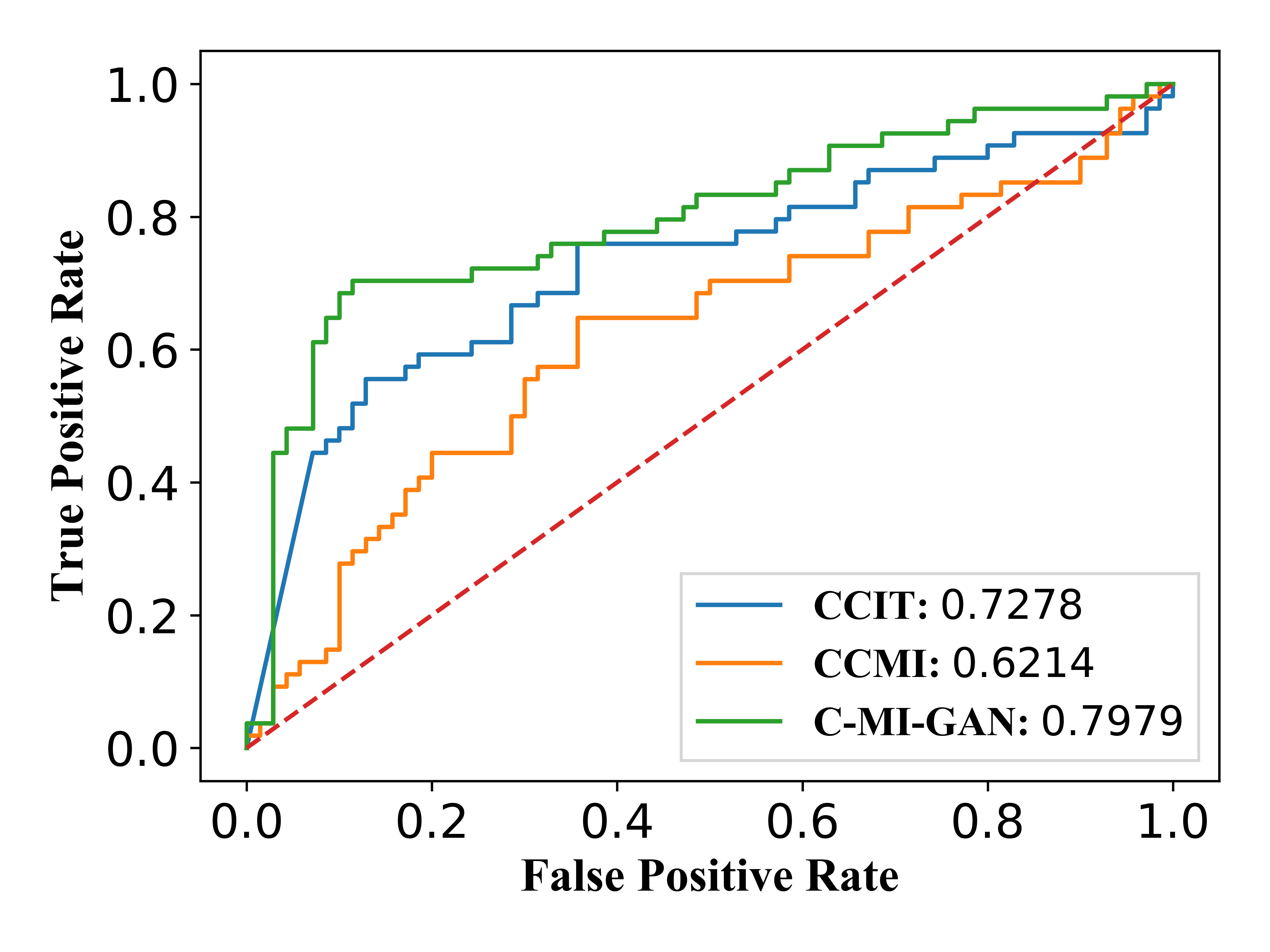}
    \caption{AuROC Curves: Flow-Cytometry  Data-set. CCIT obtains a mean AuROC score of 0.728, CCMI obtains a mean of 0.62 while C-MI-GAN outperforms both of them with a mean AuROC score of 0.798  (Best viewed in color).}
    \label{fig:flow_cyto_roc}
    \vspace{-5mm}
\end{figure}{}
Although \cite{shah2018hardness} argues that domain knowledge is necessary to select an appropriate conditional independence test for a particular dataset, C-MI-GAN outperforms both CCIT and CCMI (as measured using AuROC) consistently across all the experiments. This might be because, in the min-max formulation, the regression network, $R_\phi$ and the generator network, $G_\theta$ are trained jointly using an adversarial scheme. As a result, the two networks together cancel the bias present in each other resulting in performance boost in high sample regime (a regime where GANs excel).

\section{DISCUSSION AND CONCLUSION}
In this work, we propose a novel CMI estimator, C-MI-GAN. This estimator is based on the formulation of CMI as a min-max objective that can be optimized using joint training. We refrain from estimating two separate MI terms that could have unequal bias present. As opposed to separately training a conditional sampler and a divergence estimator, which may be sup-optimal, our joint training incorporates both steps into a single training procedure. We find that the estimator obtains improved estimates over a range of linear and non-linear datasets, across a wide range of dimension of the conditioning variable and sample size. Finally, we achieve performance boost in CI testing on simulated and real datasets using our improved estimator.

{\small\section*{Acknowledgement}
We thank IIT Delhi HPC facility\footnote{\emph{http://supercomputing.iitd.ac.in}} for computational resources. Sreeram Kannan is supported by NSF awards 1651236 and 1703403 and NIH grant 5R01HG008164. Himanshu Asnani acknowledges the support of Department of Atomic Energy, Government of India, under project no. 12-R\&D-TFR-5.01-0500. Any opinions, findings, conclusions or recommendations expressed in this paper are those of the authors and do not necessarily reflect the views or official policies, either expressed or implied, of the funding agencies.}

{\small
\bibliography{Ref}

\begin{thebibliography}{36}
\providecommand{\natexlab}[1]{#1}
\providecommand{\url}[1]{\texttt{#1}}
\expandafter\ifx\csname urlstyle\endcsname\relax
  \providecommand{\doi}[1]{doi: #1}\else
  \providecommand{\doi}{doi: \begingroup \urlstyle{rm}\Url}\fi

\bibitem[Belghazi et~al.(2018)Belghazi, Baratin, Rajeshwar, Ozair, Bengio,
  Courville, and Hjelm]{belghazi}
Mohamed~Ishmael Belghazi, Aristide Baratin, Sai Rajeshwar, Sherjil Ozair,
  Yoshua Bengio, Aaron Courville, and Devon Hjelm.
\newblock Mutual information neural estimation.
\newblock In \emph{Proc. of ICML}, 2018.

\bibitem[Cover and Thomas(2012)]{cover2012elements}
Thomas~M Cover and Joy~A Thomas.
\newblock \emph{Elements of information theory}.
\newblock John Wiley \& Sons, 2012.

\bibitem[Donsker and Varadhan(1975)]{donsker1975asymptotic}
Monroe~D Donsker and SR~Srinivasa Varadhan.
\newblock Asymptotic evaluation of certain markov process expectations for
  large time, {I}.
\newblock \emph{Communications on Pure and Applied Mathematics}, 1975.

\bibitem[Doran et~al.(2014)Doran, Muandet, Zhang, and
  Sch\"{o}lkopf]{10.5555/3020751.3020766}
Gary Doran, Krikamol Muandet, Kun Zhang, and Bernhard Sch\"{o}lkopf.
\newblock A permutation-based kernel conditional independence test.
\newblock In \emph{Proc. of UAI}, 2014.

\bibitem[Fleuret(2004)]{fleuret2004fast}
Fran{\c{c}}ois Fleuret.
\newblock Fast binary feature selection with conditional mutual information.
\newblock \emph{Journal of Machine learning research}, 2004.

\bibitem[Fukumizu et~al.(2008)Fukumizu, Gretton, Sun, and
  Sch{\"o}lkopf]{fukumizu2008kernel}
Kenji Fukumizu, Arthur Gretton, Xiaohai Sun, and Bernhard Sch{\"o}lkopf.
\newblock Kernel measures of conditional dependence.
\newblock In \emph{Proc. of NeurIPS}, 2008.

\bibitem[Gao et~al.(2015)Gao, Ver~Steeg, and Galstyan]{gao2015efficient}
Shuyang Gao, Greg Ver~Steeg, and Aram Galstyan.
\newblock Efficient estimation of mutual information for strongly dependent
  variables.
\newblock In \emph{Proc of AISTATS}, 2015.

\bibitem[Gao et~al.(2016)Gao, Oh, and Viswanath]{gao2016breaking}
Weihao Gao, Sewoong Oh, and Pramod Viswanath.
\newblock Breaking the bandwidth barrier: Geometrical adaptive entropy
  estimation.
\newblock In \emph{Proc. of NeurIPS}, 2016.

\bibitem[Gao et~al.(2018)Gao, Oh, and Viswanath]{gao2018demystifying}
Weihao Gao, Sewoong Oh, and Pramod Viswanath.
\newblock Demystifying fixed $ k $-nearest neighbor information estimators.
\newblock \emph{IEEE Transactions on Information Theory}, 2018.

\bibitem[Giorgi et~al.(2014)Giorgi, Lopez, Woo, Bisikirska, Califano, and
  Bansal]{giorgi2014protein}
Federico~M Giorgi, Gonzalo Lopez, Jung~H Woo, Brygida Bisikirska, Andrea
  Califano, and Mukesh Bansal.
\newblock Inferring protein modulation from gene expression data using
  conditional mutual information.
\newblock \emph{PloS one}, 2014.

\bibitem[Goodfellow et~al.(2014)Goodfellow, Pouget-Abadie, Mirza, Xu,
  Warde-Farley, Ozair, Courville, and Bengio]{goodfellow2014GAN}
Ian Goodfellow, Jean Pouget-Abadie, Mehdi Mirza, Bing Xu, David Warde-Farley,
  Sherjil Ozair, Aaron Courville, and Yoshua Bengio.
\newblock Generative adversarial nets.
\newblock In \emph{Proc. of NeurIPS}, 2014.

\bibitem[Joe(1989)]{joe1989relative}
Harry Joe.
\newblock Relative entropy measures of multivariate dependence.
\newblock \emph{Journal of the American Statistical Association}, 84\penalty0
  (405), 1989.

\bibitem[Kozachenko and Leonenko(1987)]{kozachenko1987}
LF~Kozachenko and Nikolai~N Leonenko.
\newblock Sample estimate of the entropy of a random vector.
\newblock \emph{Problemy Peredachi Informatsii}, 23\penalty0 (2), 1987.

\bibitem[Kraskov et~al.(2004)Kraskov, St{\"o}gbauer, and
  Grassberger]{kraskov2004}
Alexander Kraskov, Harald St{\"o}gbauer, and Peter Grassberger.
\newblock Estimating mutual information.
\newblock \emph{Physical review E}, 69\penalty0 (6), 2004.

\bibitem[Li et~al.(2011)Li, Ouyang, Li, and Li]{li2011spike}
Zhaohui Li, Gaoxiang Ouyang, Duan Li, and Xiaoli Li.
\newblock Characterization of the causality between spike trains with
  permutation conditional mutual information.
\newblock \emph{Physical Review E}, 2011.

\bibitem[Loeckx et~al.(2009)Loeckx, Slagmolen, Maes, Vandermeulen, and
  Suetens]{loeckx2009nonrigid}
Dirk Loeckx, Pieter Slagmolen, Frederik Maes, Dirk Vandermeulen, and Paul
  Suetens.
\newblock Nonrigid image registration using conditional mutual information.
\newblock \emph{IEEE transactions on medical imaging}, 2009.

\bibitem[Mooij and Heskes(2013)]{10.5555/3023638.3023682}
Joris~M. Mooij and Tom Heskes.
\newblock Cyclic causal discovery from continuous equilibrium data.
\newblock In \emph{Proc. of UAI}, 2013.

\bibitem[Mukherjee et~al.(2019)Mukherjee, Asnani, and
  Kannan]{mukherjee2019ccmi}
Sudipto Mukherjee, Himanshu Asnani, and Sreeram Kannan.
\newblock {CCMI}: {C}lassifier based {C}onditional {M}utual {I}nformation
  {E}stimation.
\newblock In \emph{Proc. of UAI}, 2019.

\bibitem[Nguyen et~al.(2010)Nguyen, Wainwright, and
  Jordan]{nguyen2010estimating}
XuanLong Nguyen, Martin~J Wainwright, and Michael~I Jordan.
\newblock Estimating divergence functionals and the likelihood ratio by convex
  risk minimization.
\newblock \emph{IEEE Transactions on Information Theory}, 56\penalty0 (11),
  2010.

\bibitem[Nowozin et~al.(2016)Nowozin, Cseke, and Tomioka]{fgan}
Sebastian Nowozin, Botond Cseke, and Ryota Tomioka.
\newblock f-gan: Training generative neural samplers using variational
  divergence minimization.
\newblock In \emph{Proc. of NeurIPS}, 2016.

\bibitem[Oord et~al.(2018)Oord, Li, and Vinyals]{oord2018representation}
Aaron van~den Oord, Yazhe Li, and Oriol Vinyals.
\newblock Representation learning with contrastive predictive coding.
\newblock \emph{arXiv preprint arXiv:1807.03748}, 2018.

\bibitem[P{\'o}czos and Schneider(2012)]{poczos2012nonparametric}
Barnab{\'a}s P{\'o}czos and Jeff~G Schneider.
\newblock Nonparametric estimation of conditional information and divergences.
\newblock In \emph{Proc. of AISTATS}, 2012.

\bibitem[Poole et~al.(2019)Poole, Ozair, Van Den~Oord, Alemi, and
  Tucker]{poole}
Ben Poole, Sherjil Ozair, Aaron Van Den~Oord, Alex Alemi, and George Tucker.
\newblock On variational bounds of mutual information.
\newblock In \emph{Proc. of ICML}, 2019.

\bibitem[R{\'e}nyi(1959)]{renyi1959measures}
Alfr{\'e}d R{\'e}nyi.
\newblock On measures of dependence.
\newblock \emph{Acta mathematica hungarica}, 10\penalty0 (3-4), 1959.

\bibitem[Runge(2018)]{rungeCMIkNN}
Jakob Runge.
\newblock Conditional independence testing based on a nearest-neighbor
  estimator of conditional mutual information.
\newblock In \emph{Proc. of AISTATS}, 2018.

\bibitem[Runge et~al.(2019)Runge, Nowack, Kretschmer, Flaxman, and
  Sejdinovic]{rungeAAAS}
Jakob Runge, Peer Nowack, Marlene Kretschmer, Seth Flaxman, and Dino
  Sejdinovic.
\newblock Detecting and quantifying causal associations in large nonlinear time
  series datasets.
\newblock \emph{Science Advances}, 5\penalty0 (11), 2019.

\bibitem[Sachs et~al.(2005)Sachs, Perez, Pe{\textquoteright}er, Lauffenburger,
  and Nolan]{Sachs523}
Karen Sachs, Omar Perez, Dana Pe{\textquoteright}er, Douglas~A. Lauffenburger,
  and Garry~P. Nolan.
\newblock Causal protein-signaling networks derived from multiparameter
  single-cell data.
\newblock \emph{Science}, 308\penalty0 (5721), 2005.

\bibitem[Sen et~al.(2017)Sen, Suresh, Shanmugam, Dimakis, and
  Shakkottai]{sen2017modelpowered}
Rajat Sen, Ananda~Theertha Suresh, Karthikeyan Shanmugam, Alexandros~G.
  Dimakis, and Sanjay Shakkottai.
\newblock Model-powered conditional independence test.
\newblock In \emph{Proc. of NeurIPS}, 2017.

\bibitem[Shah and Peters(2018)]{shah2018hardness}
Rajen~D. Shah and Jonas Peters.
\newblock The hardness of conditional independence testing and the generalised
  covariance measure.
\newblock \emph{arXiv preprint arXiv:1804.07203}, 2018.

\bibitem[Shishkin et~al.(2016)Shishkin, Bezzubtseva, Drutsa, Shishkov,
  Gladkikh, Gusev, and Serdyukov]{shishkin2016nips}
Alexander Shishkin, Anastasia Bezzubtseva, Alexey Drutsa, Ilia Shishkov,
  Ekaterina Gladkikh, Gleb Gusev, and Pavel Serdyukov.
\newblock Efficient high-order interaction-aware feature selection based on
  conditional mutual information.
\newblock In \emph{Proc. of NeurIPS}, 2016.

\bibitem[Sohn et~al.(2015)Sohn, Lee, and Yan]{NIPS2015_5775}
Kihyuk Sohn, Honglak Lee, and Xinchen Yan.
\newblock Learning structured output representation using deep conditional
  generative models.
\newblock In \emph{Proc. of NeurIPS}, 2015.

\bibitem[Song and Ermon(2020)]{song2019iclr}
Jiaming Song and Stefano Ermon.
\newblock Understanding the limitations of variational mutual information
  estimators.
\newblock \emph{Proc. of ICLR}, 2020.

\bibitem[Vito et~al.(2008)Vito, Massera, Piga, Martinotto, and
  Francia]{aq_data}
Saverio~De Vito, Ettore Massera, Marco Piga, Luca Martinotto, and Girolamo~Di
  Francia.
\newblock On field calibration of an electronic nose for benzene estimation in
  an urban pollution monitoring scenario.
\newblock \emph{Sensors and Actuators B: Chemical}, 129\penalty0 (2), 2008.

\bibitem[Wang and Lochovsky(2004)]{wang2004feature}
Gang Wang and Frederick~H Lochovsky.
\newblock Feature selection with conditional mutual information maximin in text
  categorization.
\newblock In \emph{Proc. of ACM CIKM}, 2004.

\bibitem[Yang and Blum(2007)]{yang2007mimo}
Yang Yang and Rick~S Blum.
\newblock Mimo radar waveform design based on mutual information and minimum
  mean-square error estimation.
\newblock \emph{IEEE Transactions on Aerospace and Electronic Systems},
  43\penalty0 (1), 2007.

\bibitem[Zhang et~al.(2012)Zhang, Zhao, He, Lu, Cao, Liu, Hao, Liu, and
  Chen]{zhang2012grn}
Xiujun Zhang, Xing-Ming Zhao, Kun He, Le~Lu, Yongwei Cao, Jingdong Liu, Jin-Kao
  Hao, Zhi-Ping Liu, and Luonan Chen.
\newblock Inferring gene regulatory networks from gene expression data by path
  consistency algorithm based on conditional mutual information.
\newblock \emph{Bioinformatics}, 28\penalty0 (1), 2012.

\end{thebibliography}
}

\section{MODEL ARCHITECTURES}
In this section we provide the architectural details of C-MI-GAN used for CMI estimation and conditional independance testing.
\begin{table}[ht]
    \caption{ Hyperparameters: CMI Estimation}
    \label{table:hyperparameter cmi est}
    \begin{center}
	\resizebox{\columnwidth}{!}{
        \begin{tabular}{c c c }
            \hline
            Hyperparameters & $R_\phi$ & $G_\theta$\\
            \hline
            \# Hidden layers & 2 & 2\\
            %\hline%
            Hidden Units & [128, 32] & [256, 64]\\
            %\hline%
            Activation: Hidden Layers & ReLu & ReLu\\
            Activation: Final Layer & Identity & Identity\\
            %hline%
            Batch Size & 4096 & 4096\\
            Initial Learning rate & $5\times10^{-5}$ & $5\times10^{-5}$\\
            Optimizer & RMSprop & RMSprop\\
            % Regularizer & L2 (0.0001) & L2(0.0001)\\
            \# Training Steps & $30$k & $30$k\\
            % $d_z$ to Noise dim. ratio & - & 4\\
            % Noise distribution & - & $\mathcal{N}(0, I_{\frac{d_z}{4}})$\\
            Learning rate scheduling interval & $1$k & $1$k\\
            Learning rate decay factor & 10 & 10\\
            
            \hline \hline \\
        \end{tabular}
        }
    \end{center}
\end{table}
\begin{table}[ht]
    \begin{center}
    \caption{ Hyperparameters: CIT Application}
    \label{table:Hyperparameter cit application}
	\resizebox{\columnwidth}{!}{
        \begin{tabular}{c c c }
            \hline
            Hyperparameters & $R_\phi$ & $G_\theta$\\
            \hline
            \# Hidden layers & $3$ & $3$\\
            %\hline%
            Hidden Units & $[128, 32, 8]$ & $[128, 64, 16]$\\
            %\hline%
            Activation: Hidden Layers & ReLu & ReLu\\
            Activation: Final Layer & Identity & Identity\\
            %hline%
            Batch Size & $4096$ & $4096$\\
            Initial Learning rate & $0.001$ & $0.001$\\
            Optimizer & RMSprop & RMSprop\\
            % Regularizer & L2 (0.0001) & L2(0.0001)\\
            \# Training Steps & 10k & 10k\\
            Learning rate scheduling interval & $1$k & $1$k\\
            Learning rate decay factor & $10$ & $10$\\
            
            \hline \hline \\
        \end{tabular}
        }
    \end{center}
\end{table}

\section{MI-GAN}
Ideas presented in Sec. \ref{sec:cmi_min_max} in the main paper can be applied to estimating mutual information as well.

\begin{equation}
    \begin{split}
        I(X;Y) &= D_{KL}(P_{XY}||P_XP_Y) \\
        &= D_{KL}(P_{XY}||P_{X}Q_{Y}) - D_{KL}(P_Y || Q_Y) \\
        &\le D_{KL}(P_{XY}||P_{X}Q_{Y})
    \end{split}
    \label{mi_upper_bound}
\end{equation}
In equation \ref{mi_upper_bound} the equality is achieved when $Q_Y = P_Y$ and it may be expressed as
\begin{equation}
    \begin{split}
        I(X;Y) = \mathop{\text{inf}}_{Q_Y} D_{KL}(P_{XY}||P_{X}Q_{Y})
    \end{split}
    \label{mi_inf_formulation}
\end{equation}
Equation \ref{mi_inf_formulation} coupled with the Donsker-Varadhan bound leads to a min-max optimization for MI estimation as below.
\begin{equation}
    \begin{split}
        I(X;Y) &= \mathop{\text{inf}}_{Q_Y}~\mathop{\text{sup}}_{R \in \mathcal{R}}\bigg(\mathop{\mathbb{E}}_{s\sim P_{XY}}[R(s)] -\\ 
        &\qquad \log\big(\mathop{\mathbb{E}}_{s\sim P_XQ_Y}[e^{R(s)}]\big)\bigg)
    \end{split}
    \label{mi_min_max_formulation}
\end{equation}

Here we consider a simple setting of two correlated gaussian random variables in $2n$ dimensions, $ (X,Y) \sim \mathcal N \left (\overrightarrow{0}_{2n\times 1}, \Sigma=\begin{pmatrix}
I_{n\times n} & \rho I_{n\times n} \\ 
\rho I_{n\times n} & I_{n\times n}
\end{pmatrix} \right )$ and estimate $I(X;Y)$ using the min-max formulation as mentioned in equation (\ref{mi_min_max_formulation}). Next, we compare the results with the existing estimators. Figures \ref{fig:gaussian_dep_dx_1} and \ref{fig:gaussian_dep_dx_10} plots the estimated mi using MI-GAN, Classifier MI and f-MINE.

\begin{figure}[!ht]
\centering
\begin{subfigure}{\columnwidth}
   \centering
    \includegraphics[clip, trim=0.5cm 0.5cm 0.5cm 0.5cm, keepaspectratio, width=\linewidth]{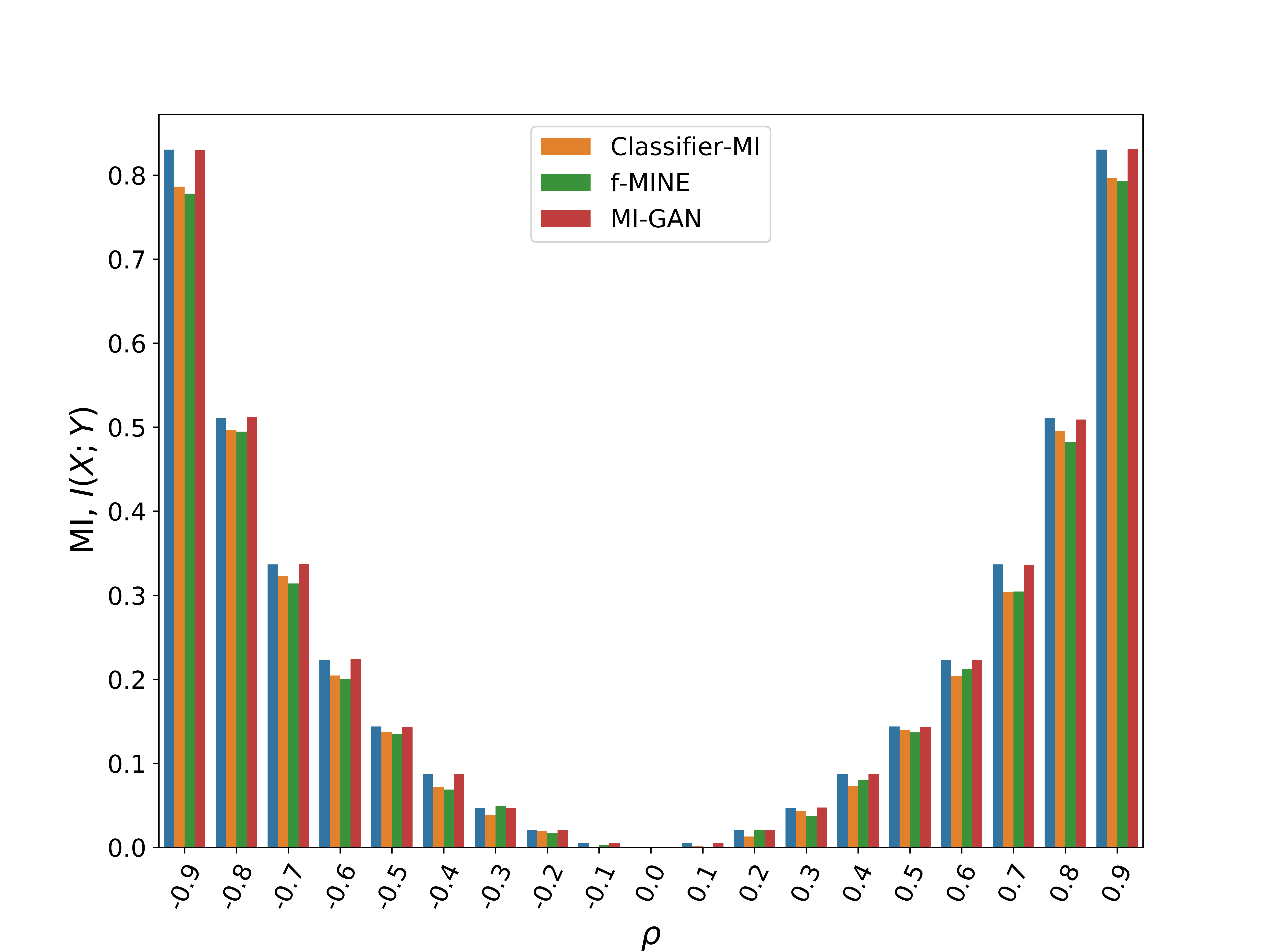}
    \caption{$d_X = d_y = 1$}
    \label{fig:gaussian_dep_dx_1}
\end{subfigure}

\begin{subfigure}{\columnwidth}
   \centering
    \includegraphics[clip, trim=0.5cm 0.5cm 0.5cm 0.5cm, keepaspectratio, width=\linewidth]{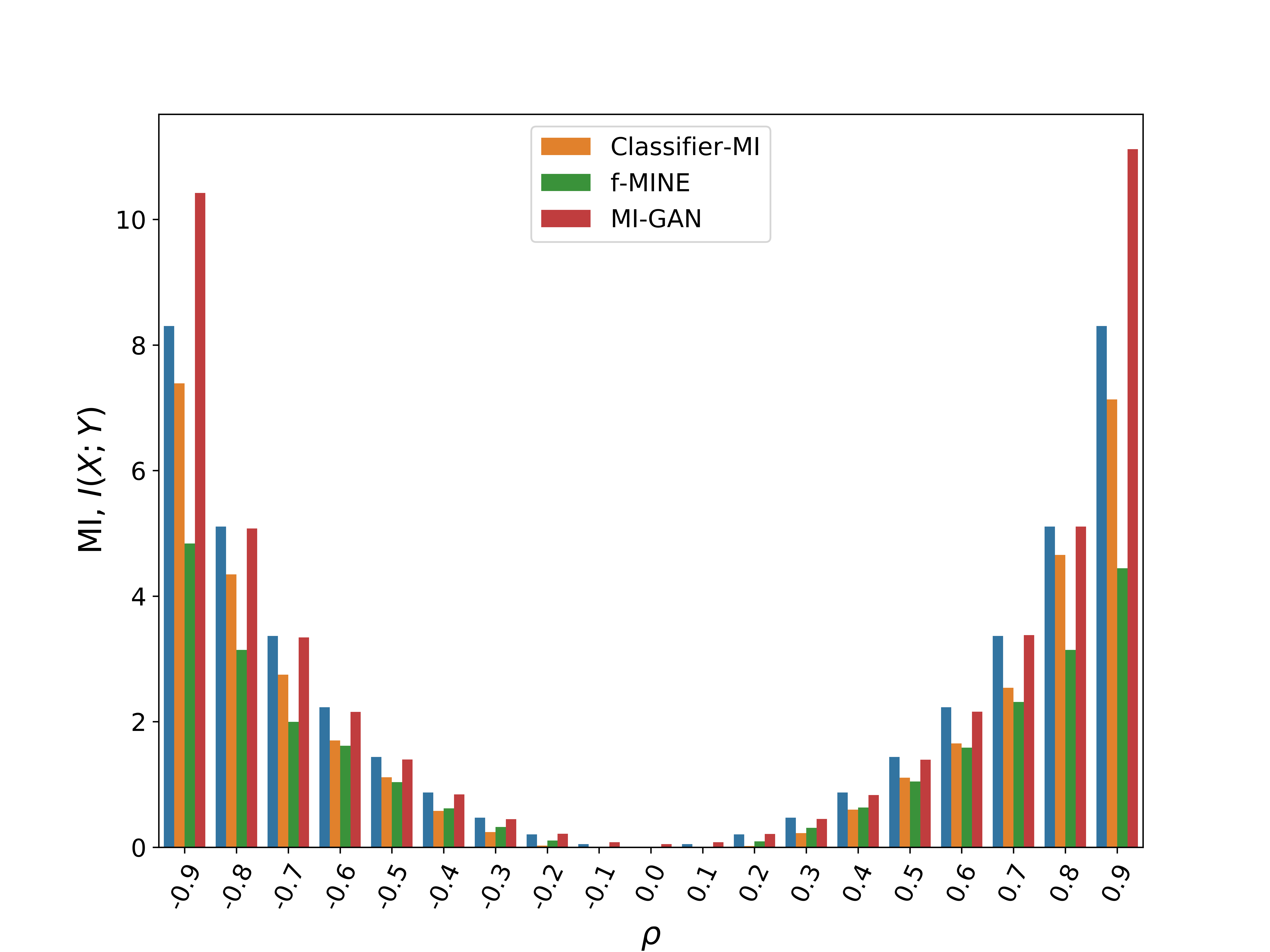}
    \caption{$d_X = d_y = 10$}
    \label{fig:gaussian_dep_dx_10}
\end{subfigure}

\caption{ Mutual Information Estimation of Correlated Gaussians: In this setting, $X$ and $Y$ have independent co-ordinates, with $(X_i,Y_i) ~ \forall i$ being correlated Gaussians with correlation coefficient $\rho$. $I^{*}(X;Y) =-\frac{1}{2}d_x\log(1-\rho^2)$ (Best viewed in color).}
\end{figure}

\begin{figure*}[t]
    \centering
    \includegraphics[keepaspectratio, width=\textwidth]{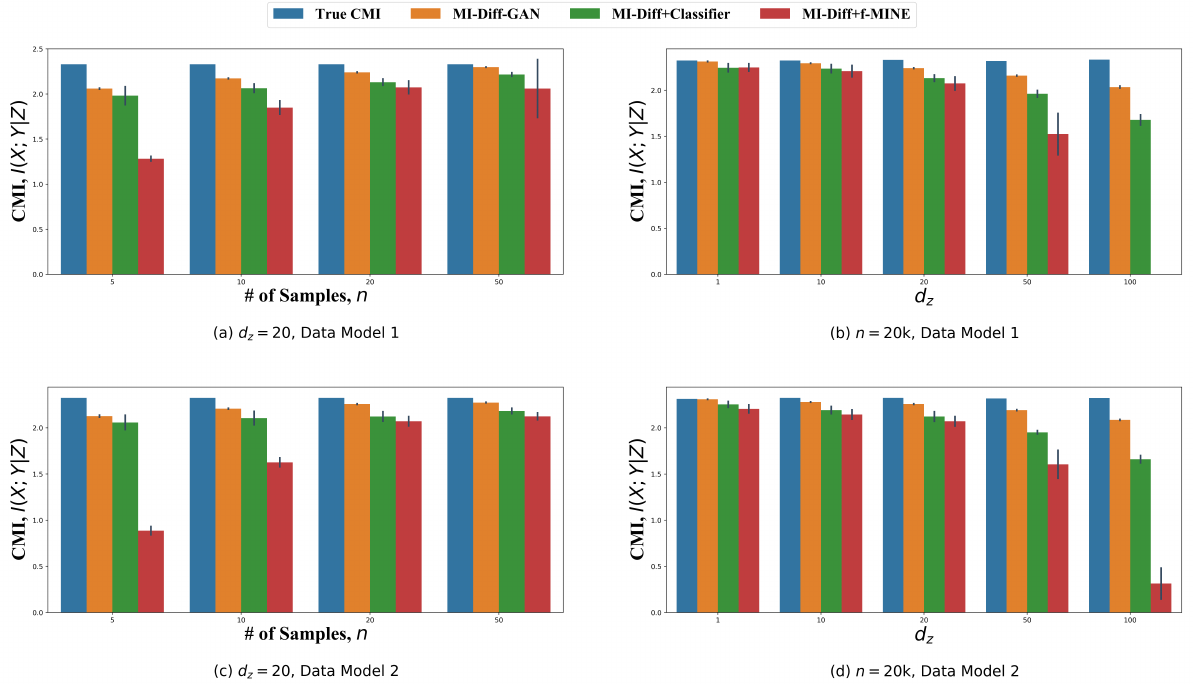}
    \caption{Performance of MI-Diff-GAN on the data generated using linear models proposed in Sec. \ref{lin_data_models} of the main paper. (a) Model 1: CMI estimates with fixed $d_z=20$ and variable sample size, (b) Model 1: CMI estimates with fixed sample size, $n=20$k and variable $d_z$, (c) Model 2: CMI estimates with fixed $d_z=20$ and variable sample size, (d) Model 2: CMI estimates with fixed number of samples, $n=20k$ and variable $d_z$. Here, we compare our results with other MI-Diff based estimators of CMI such as MI-Diff. + Classifier and MI-Diff. + f-MINE. Average over 10 runs is plotted. Variation in the estimates are measured using standard deviation and are highlighted in the plots using the thin dark lines on top of the bar plots. MI-Diff-GAN  outperforms the state of the art methods in terms of accuracy in cmi estimation and variation in estimates. (Best viewed in color)}
    \label{fig:sup_lin_cmi_comparison}
\end{figure*}{}

\begin{figure*}[t]
    \centering
    \includegraphics[keepaspectratio, width=\textwidth]{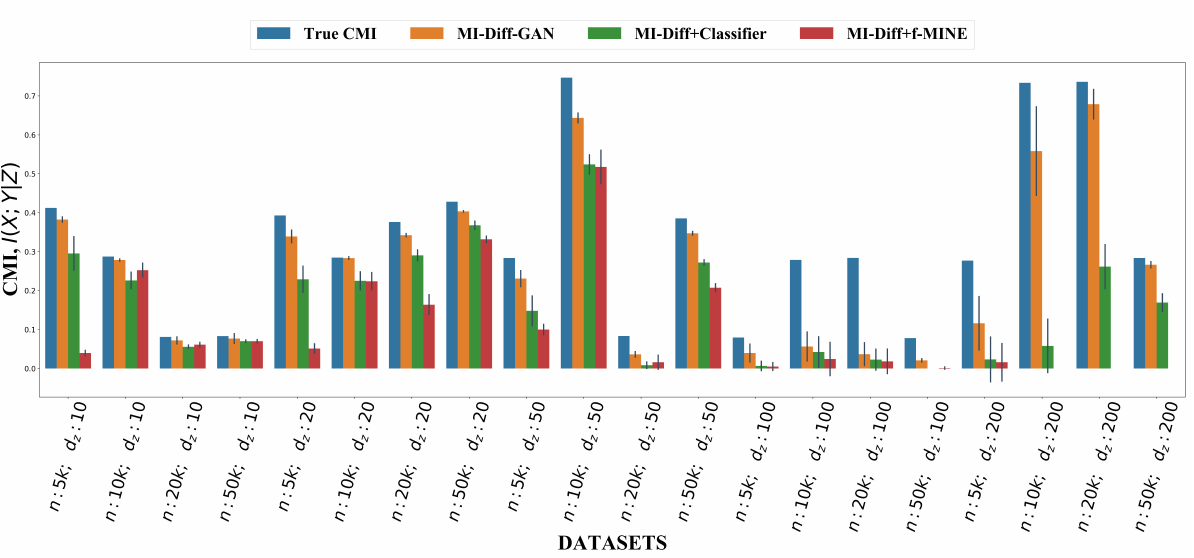}
    \caption{Performance of MI-Diff-GAN on the data generated using non linear models proposed in Sec. \ref{non-lin-data-model} of the main paper. Again we compare against MI-Diff based CMI estimators such as MI-Diff. + Classifier and MI-Diff. + f-MINE. Average over $10$ runs is plotted. Variation in the estimates are measured using standard deviation and are highlighted in the plots using the thin dark lines on top of the bar plots. MI-Diff-GAN performs better in terms of both estimation accuracy and variation in estimates. (Best viewed in color)}
    \label{fig:sup_non_lin_cmi_comparison}
\end{figure*}{}

\subsection{DIFFERENCE BASED MIN-MAX FORMULATION FOR CMI}\label{mi_diff_gan}
Conditional mutual information can be expressed as difference of two mutual information as below:
\begin{equation}
    I(X;Y|Z) = I(X;YZ) - I(X;Z)
    \label{cmi_diff_formula}
\end{equation}
Equation \ref{mi_min_max_formulation} and equation \ref{cmi_diff_formula} together leads to another min-max formulation of $I(X;Y|Z)$ requiring one generator and two regression networks. 
\begin{equation}
    \begin{split}
        I(X;Y|Z) &= I(X;YZ) - I(X;Z)\\
        &= \mathop{\text{inf}}_{Q_X}\bigg(\mathop{\text{sup}}_{R_1 \in \mathcal{R}}\Big(\mathop{\mathbb{E}}_{s\sim P_{XYZ}}[R_1(s)] \\
        &\qquad - \log\big(\mathop{\mathbb{E}}_{s\sim Q_{X}P_{YZ}}[e^{R_1(s)}]\big)\Big) \\
        &\qquad -  \mathop{\text{sup}}_{R_2 \in \mathcal{R}}\Big(\mathop{\mathbb{E}}_{s\sim P_{XZ}}[R_2(s)] \\
        &\qquad - \log\big(\mathop{\mathbb{E}}_{s\sim Q_{X}P_{Z}}[e^{R_2(s)}]\big)\Big)\bigg)
    \end{split}
    \label{cmi_mi_diff_min_max_formulation}
\end{equation}

We call the estimator based on difference based min-max formulation for CMI (equation \ref{cmi_mi_diff_min_max_formulation}) as MI-Diff.-GAN. We evaluate the model on the synthetic datasets proposed in Sections \ref{lin_data_models} and \ref{non-lin-data-model} of the main paper. In Figures \ref{fig:sup_lin_cmi_comparison} and \ref{fig:sup_non_lin_cmi_comparison}, we compare the results of MI-Diff.-GAN with the existing difference based CMI estimators. However a detailed comparison of MI-Diff.-GAN with all other state of the art estimators has been tabulated in Tables \ref{table:cmi_lin_model1_comparison}, \ref{table:cmi_lin_model2_comparison} and \ref{table:cmi_nonlin_comparison}, which comprise of the average estimated CMI and the variance in the estimation, calculated over $10$ runs, corresponding to each of the estimators discussed in this work. Moreover, the RMSE calculated between the true cmi and the average estimated cmi values for each of the estimators,  have been highlighted in Table \ref{table:cmi_rmse_estimators}. It can serve as a naive performance metric for the estimators. \par
From the results obtained from the various experimental setups discussed in this work , it is apparent that MI-Diff.-GAN and C-MI-GAN are at par with each other and superior to the  existing CMI estimators in terms of accuracy in CMI estimation and variation in estimates.
\section{AIR QUALITY DATASET}
Air Quality Dataset is a time series data with $9358$ instances of hourly averaged responses from an array of 5 metal oxide chemical sensors embedded in an Air Quality Chemical Multisensor Device, which was located on the field in a significantly polluted area, at road level, within an Italian city. Data were recorded over a span of $1$ year starting from March 2004 to February 2005. In this work, we consider different combinations of hourly averaged concentrations for carbon monoxide (CO), benzene (C6H6), nitrogen dioxide (NO2), as well as temperature(T), relative humidity (RH) and absolute humidity(AH) as random variable and estimate CMI. In our experiments, $X$, $Y$, and $Z$ are always uni-dimensional. Before estimating CMI, we shuffle the samples to remove the temporal attributes and normalize the data. For the estimates of the proposed method, please refer to Table \ref{table:aq_cmi} in the main paper. As can be seen from the table, our estimates are in coherence with the findings of \cite{rungeCMIkNN} and the estimates of CCMI (\cite{mukherjee2019ccmi}) and KSG (\cite{kraskov2004}). Please note that the air quality dataset being a time series data, is not i.i.d data. Thus, there exists correlation between variables.
\begin{figure}[H]
    \centering
    \includegraphics[keepaspectratio, width=\columnwidth]{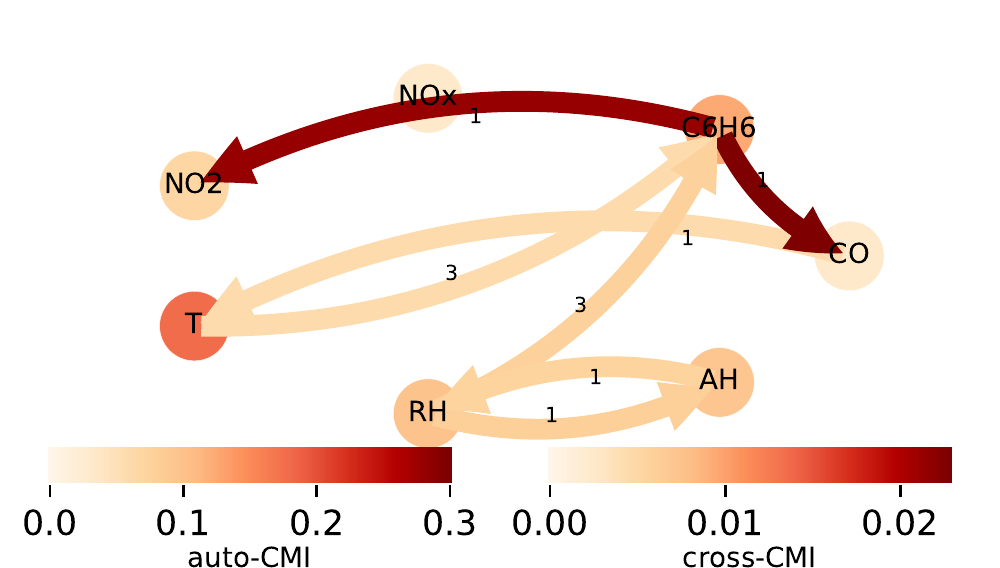}
    \caption{Causal graph as discovered by \cite{rungeCMIkNN} for air quality dataset (\cite{aq_data}).The edge color gives the strength of the CMI between different nodes with the edge labels denoting the time lag in hours. This figure is taken as it is from \cite{rungeCMIkNN} and included here for quick reference.}
    \label{fig:airquality_graph}
\end{figure}

\section{FLOW CYTOMETRY DATASET}
The Flow Cytometry dataset was originally developed by \cite{Sachs523} in order to study and derive causal influences in cellular signaling networks. Multiple phosphorylated protein and phospholipid components were simultaneously measured in thousands of individual primary human immune system cells. In order to observe the ordering of connections between these components constituting molecular pathways, these cells were perturbed with molecular interventions. Data pertaining to 14 such types of interventions on the molecular components are available ( Table 1 in \cite{Sachs523}), out of which we chose a subset of 8 data sets for our experiments.
Figure \ref{fig:flow_cytometry_graph} consists of the consensus graph that was used in \cite{Sachs523}, \cite{sen2017modelpowered} and \cite{10.5555/3023638.3023682}  as a ground truth network for causal discovery and conditional independence testing applications. We also considered this as the ground truth for our experiments in CI testing.
\begin{figure}[H]
    \centering
    \includegraphics[keepaspectratio, width=0.7\linewidth]{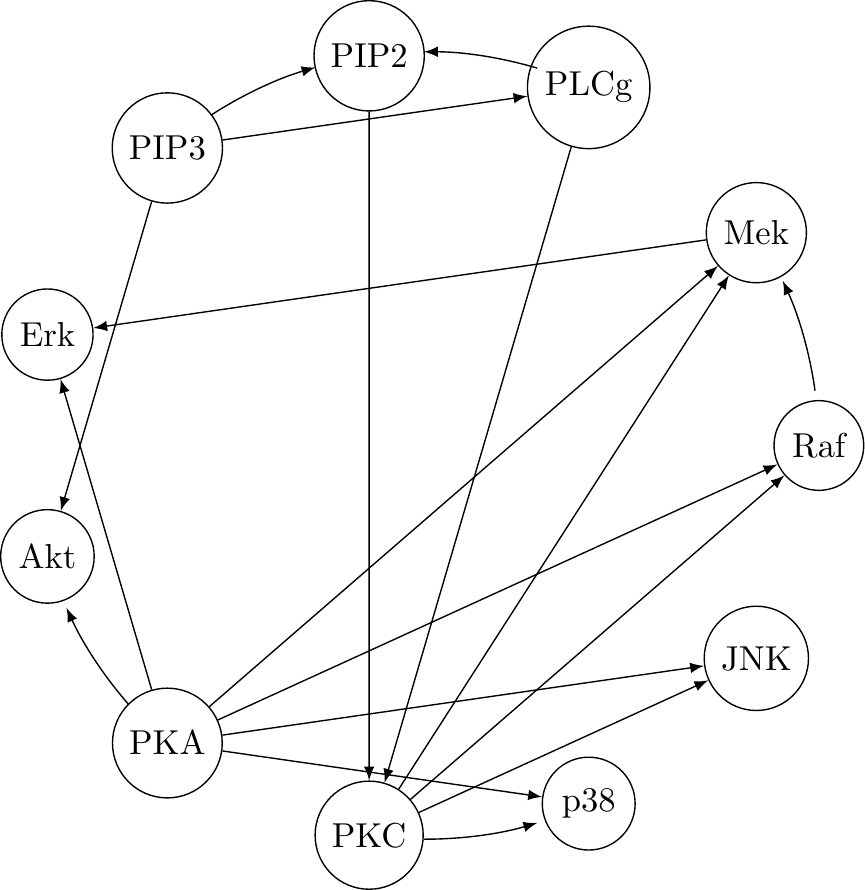}
    \caption{Consensus network, according to \cite{Sachs523}.}
    \label{fig:flow_cytometry_graph}
\end{figure}{}

\begin{table*}[ht]
    \begin{center}
    \caption{ Results Of CMI Estimation On Linear Model I}
    \label{table:cmi_lin_model1_comparison}
	\resizebox{\linewidth}{!}{
        \begin{tabular}{c c c c c c c c c c c c c c c c c}
            \hline
            \multirow{2}{*}{$d_z$} & \multirow{2}{*}{$n$} & \multicolumn{2}{c}{CGAN+Classifier} &  \multicolumn{2}{c}{CVAE+Classifier} &  \multicolumn{2}{c}{KNN+Classifier} &  \multicolumn{2}{c}{MI-Diff.+Classifier} &  \multicolumn{2}{c}{MI-Diff.+f-MINE} &  \multicolumn{2}{c}{C-MI-GAN} &  \multicolumn{2}{c}{MI-Diff-GAN} & True CMI\\
            \cline{3-16}
            & & Avg. & Std. &  Avg. & Std. &  Avg. & Std. &  Avg. & Std. &  Avg. & Std. &  Avg. & Std. & Avg. & Std. &\\
            \hline
            20	& 5	& 1.916	& 0.184	& 1.936 & 0.126 & 1.892 & 0.171 &	1.982 &	0.105 &	1.283 &	0.027 &	2.348 &	0.004 &	2.06 & 0.007 & 2.329903976\\
            %\hline%
            20 & 10 & 1.963	& 0.157 & 1.92 &	0.086 &	1.947 &	0.113 &	2.064 &	0.049 &	1.85 &	0.078 &	2.314 &	0.004 & 2.173 &	0.004 & 2.329903976\\
            %\hline%
            20 & 20	& 2.091	& 0.074 & 2.041 &	0.075 &	2.068 &	0.066 &	2.131 &	0.036 &	2.074 &	0.075 &	2.306 &	0.003 &	2.24 & 0.004 & 2.329903976\\
            %\hline%
            20 & 50	& 2.195	& 0.051	& 2.204 &	0.051 &	2.158 &	0.028 &	2.217 &	0.02 &	2.06 &	0.338 &	2.318 &	0.003 & 2.298 & 0.003 & 	2.329903976\\
            %\hline%
            1 & 20 & 2.304 & 0.08 &	2.261 &	0.058 &	2.264 &	0.084 &	2.244 &	0.046 &	2.248 &	0.042 &	2.318 &	0.002 &	2.314 &	0.003 & 2.324202061\\
            %\hline%
            10 & 20 & 2.229 & 0.066 & 2.207 &	0.087 &	2.162 &	0.084 &	2.235 &	0.046 &	2.208 &	0.065 &	2.323 &	0.003 &	2.294 & 0.002 & 2.323732899\\
            %\hline%
            50 & 20 & 1.744 & 0.09 & 1.631 &	0.081 &	1.651 &	0.07 &	1.962 &	0.037 &	1.524 &	0.237 &	2.323 &	0.007 &	2.159 &	0.004 & 2.318082313\\
            %\hline%
            100	& 20 & 1.423 & 0.117 & 1.103 &	0.109 &	1.141 &	0.084 &	1.678 &	0.058 &	0	& 0	& 2.379	& 0.011 & 2.035 &	0.012 & 2.333104637\\
            \hline \hline \\
        \end{tabular}
        }
    \end{center}
\end{table*}

\begin{table*}[ht]
    \begin{center}
    \caption{Results Of CMI Estimation On Linear Model II}
    \label{table:cmi_lin_model2_comparison}
	\resizebox{\linewidth}{!}{
        \begin{tabular}{c c c c c c c c c c c c c c c c c}
            \hline
            \multirow{2}{*}{$d_z$} & \multirow{2}{*}{$n$} & \multicolumn{2}{c}{CGAN+Classifier} &  \multicolumn{2}{c}{CVAE+Classifier} &  \multicolumn{2}{c}{KNN+Classifier} &  \multicolumn{2}{c}{MI-Diff.+Classifier} &  \multicolumn{2}{c}{MI-Diff.+f-MINE} &  \multicolumn{2}{c}{C-MI-GAN} &   \multicolumn{2}{c}{MI-Diff-GAN} & True CMI\\
            \cline{3-16}
            & & Avg. & Std. &  Avg. & Std. &  Avg. & Std. &  Avg. & Std. &  Avg. & Std. &  Avg. & Std. & Avg. & Std. &\\
            \hline
            20	& 5	& 2.039	& 0.13 & 1.895 &	0.103 &	1.998 &	0.093 &	2.058 &	0.082 &	0.886 &	0.047 &	2.402 &	0.018 &	2.127 &	0.011 & 2.324819809\\
            %\hline%
            20	& 10 & 2.037 & 0.091 & 1.99 & 0.074 &	2.02 &	0.111 &	2.106 &	0.076 &	1.627 &	0.051 &	2.34 &	0.015 &	2.207 &	0.006 & 2.324819809\\
            %\hline%
            20 & 20	& 2.098	& 0.058	& 2.067 &	0.083 &	2.14 &	0.089 &	2.123 &	0.054 &	2.071 &	0.054 &	2.316 &	0.006 &	2.257 &	0.004 & 2.324819809\\
            %\hline%
            20	& 50 &	2.218 &	0.045 &	2.162 &	0.035 &	2.239 &	0.013 &	2.183 &	0.031 &	2.124 &	0.04 &	2.291 &	0.005 & 2.274 & 0.003 &	2.324819809\\
            %\hline%
            1 &	20 & 2.29 &	0.07 &	2.251 &	0.042 &	2.231 &	0.068 &	2.254 &	0.033 &	2.204 &	0.046 &	2.312 &	0.002 &	2.309 & 0.001 & 2.313665532\\
            %\hline%
            10 & 20	& 2.189	& 0.052	& 2.167 &	0.033 &	2.225 &	0.044 &	2.192 &	0.041 &	2.144 &	0.053 &	2.3 &	0.004 &	2.28 & 0.003 & 2.325076241\\
            %\hline%
            50 & 20	& 1.713	& 0.096	& 1.63 &	0.071 &	1.762 &	0.081 &	1.952 &	0.019 &	1.605 &	0.159 &	2.355 &	0.005 &	2.191 & 0.006 & 2.318283481\\
            %\hline%
            100	& 20 & 1.358 & 0.122	& 1.2 &	0.095 &	1.368 &	0.107 &	1.66 &	0.042 &	0.314 &	0.176 &	2.44 &	0.01 &	2.086 & 0.006 & 2.324413766\\
            \hline \hline \\
        \end{tabular}
        }
    \end{center}
\end{table*}

\begin{table*}[ht]
    \begin{center}
    \caption{ Results Of CMI Estimation On Non linear Model}
    \label{table:cmi_nonlin_comparison}
	\resizebox{\linewidth}{!}{
        \begin{tabular}{c c c c c c c c c c c c c c c c c}
            \hline
            \multirow{2}{*}{$d_z$} & \multirow{2}{*}{$n$} & \multicolumn{2}{c}{CGAN+Classifier} &  \multicolumn{2}{c}{CVAE+Classifier} &  \multicolumn{2}{c}{KNN+Classifier} &  \multicolumn{2}{c}{MI-Diff.+Classifier} &  \multicolumn{2}{c}{MI-Diff.+f-MINE} &  \multicolumn{2}{c}{C-MI-GAN} &  \multicolumn{2}{c}{MI-Diff-GAN} & True CMI\\
            \cline{3-16}
            & & Avg. & Std. &  Avg. & Std. &  Avg. & Std. &  Avg. & Std. &  Avg. & Std. &  Avg. & Std. & Avg. & Std. &\\
            \hline
            10 & 5 & 0.655 & 0.269 &  0.622 &	0.045 &	0.303 &	0.057 &	0.295 &	0.045 &	0.04 &	0.007 &	0.406 &	0.009  & 0.382 & 0.007 & 0.412\\
            %\hline%
            10 & 10 & 0.26 & 0.013 & 0.243 &	0.02 &	0.269 &	0.013 &	0.226 &	0.022 &	0.253 &	0.019 &	0.287 &	0.003  & 0.279 & 0.003 & 0.287\\
            %\hline%
            10 & 20 & 0.116 & 0.047 & 0.076 &	0.012 &	0.084 &	0.017 &	0.056 &	0.005 &	0.061 &	0.006 &	0.065 &	0.003 & 0.072 &	0.009 & 0.081\\
            %\hline%
            10 & 50 & 0.124 & 0.031 & 0.076 &	0.007 &	0.088 &	0.006 &	0.07 &	0.003 &	0.07 &	0.004 &	0.062 &	0.003 &	0.077 &	0.013 & 0.083\\
            %\hline%
            20 & 5 & 0.387 & 0.184 & 0.475 &	0.03 &	0.212 &	0.054 &	0.229 &	0.035 &	0.051 &	0.013 &	0.311 &	0.014 &	0.339 &	0.017 & 0.393\\
            %\hline%
            20 & 10 & 0.261 & 0.058 & 0.287 &	0.02 &	0.241 &	0.026 &	0.225 &	0.024 &	0.224 &	0.024 &	0.2 & 0.009 & 0.284 & 0.004 & 0.285\\
            %\hline%
            20 & 20 & 0.332 & 0.06 & 0.337 &	0.024 &	0.298 &	0.016 &	0.29 &	0.015 &	0.164 &	0.027 &	0.331 &	0.006 &	0.342 & 0.005 & 0.376\\
            %\hline%
            20 & 50 & 0.511 & 0.157 & 0.518 &	0.061 &	0.385 &	0.012 &	0.368 &	0.011 &	0.332 &	0.008 &	0.397 &	0.01 & 0.403 & 0.002 & 0.428\\
            %\hline%
            50 & 5 & 0 & 0 & 0.054 & 0.064 &	0.033 &	0.021 &	0.148 &	0.04 &	0.1 & 0.014 & 0.12 & 0.018 & 0.231 & 0.022 & 0.284\\
            %\hline%
            50 & 10 & 0.471 & 0.131 & 0.343 &	0.085 &	0.358 &	0.058 &	0.524 &	0.026 &	0.518 &	0.045 &	0.449 &	0.01 &	0.643 & 0.013 & 0.747\\
            %\hline%
            50 & 20 & 0.005 & 0.011 & 0 & 0 & 0 & 0 & 0.008 & 0.008 & 0.016 & 0.019 & 0.086 & 0.032 &	0.036 & 0.007 & 0.083\\
            %\hline%
            50 & 50 & 0.42 & 0.083 & 0.402 &	0.049 &	0.28 &	0.013 &	0.272 &	0.007 &	0.207 &	0.011 &	0.297 &	0.025 &	0.347 &	0.005 & 0.386\\
            %\hline%
            100 & 5 & 0 & 0 & 0 & 0 & 0 & 0	& 0.007 & 0.013	& 0.005	& 0.011	& 0	& 0.001	& 0.04 & 0.024 & 0.08\\
            %\hline%
            100	& 10 & 2.122 & 3.531 & 0 &	0 &	0 &	0 &	0.042 &	0.041 &	0.024 &	0.045 &	0.161 &	0.098 &	0.056 & 0.039 & 0.279\\
            %\hline%
            100 & 20 & 0 & 0 & 0.011 &	0.025 &	0.038 &	0.022 &	0.023 &	0.028 &	0.018 &	0.033 &	0.199 &	0.073 &	0.037 & 0.031 & 0.284\\
            %\hline%
            100	& 50 & 0 & 0 & 0 & 0 & 0 &	0 &	0 &	0 &	0.001 &	0.003 &	0.045 &	0.036 & 0.021 & 0.004 & 0.078\\
            %\hline%
            200	& 5	& 0	& 0	& 0	& 0	& 0	& 0	& 0.023	& 0.061	& 0.016	& 0.051	& 0.068	& 0.019	& 0.116 & 0.072 & 0.277\\
            %\hline%
            200	& 10 & 0 & 0 & 0 & 0 & 0 & 0 &	0.058 &	0.072 &	0 & 0 & 0.574	& 0.068 & 0.558 & 0.12 & 0.734\\
            %\hline*
            200 & 20 & 0 & 0 & 0 & 0	& 0	& 0	& 0.262	& 0.06	& 0 & 0 & 0.543 &	0.036 & 0.679 & 0.04 & 0.736\\
            %\hline&
            200 & 50 & 0 & 0 & 0 & 0 & 0 & 0 & 0.169 & 0.024 & 0 & 0 & 0.188 &	0.075 & 0.266 & 0.008 & 0.284\\
            \hline\hline\\
        \end{tabular}
        }
    \end{center}
\end{table*}
\begin{table*}[ht]
    \begin{center}
    \caption{CMI Estimation: RMSE Computed Over Different  Models}
    \label{table:cmi_rmse_estimators}
	\resizebox{\linewidth}{!}{
        \begin{tabular}{c c c c c c c c}
            \hline
            Models & CGAN+Classifier & CVAE+Classifier & KNN+Classifier & MI-Diff.+Classifier & MI-Diff.+f-MINE & C-MI-GAN & MI-Diff.-GAN\\
            \hline
            Linear Model I & $0.439951$ & $0.550609$ & $0.539955$ & $0.319044$ & $0.971969$ & $\boldsymbol{0.020806}$ &	$0.166482$\\
            %\hline%
            Linear Model II	& $0.439715$ & $0.519229$ & $0.430447$ & $0.311608$ & $0.952448$ & $\boldsymbol{0.053325}$ & $0.130044$\\
            %\hline%
            Non Linear Model & $0.49818$ & $0.290552$ & $0.289482$ & $0.228215$ & $0.241488$ & $0.119516$ & $\boldsymbol{0.099798}$\\
            \hline \hline \\
        \end{tabular}
        }
    \end{center}
\end{table*}

\end{document}